\theoremstyle{plain}
\theoremstyle{definition}
\theoremstyle{remark}
\DeclareMathOperator*{\E}{\mathbb{E}}
\begin{document}

\title{Resolving learning rates adaptively by locating Stochastic Non-Negative Associated Gradient Projection Points using line searches
}


\author{Dominic Kafka         \and
        Daniel N. Wilke 
}


\institute{
	Centre for Asset and Integrity Management (C-AIM),
	Department of Mechanical and Aeronautical Engineering,
	University of Pretoria, Pretoria, South Africa \\
	Tel.: +2712 4202432 \\ 
	Fax: +2712 263 5087 \\
	\email{dominic.kafka@gmail.com}\\
	\email{wilkedn@gmail.com} 
}

\date{Received: date / Accepted: date}

\maketitle

\begin{abstract}
Learning rates in stochastic neural network training are currently determined {\it a priori} to training, using expensive manual or automated iterative tuning. This study proposes gradient-only line searches to resolve the learning rate for neural network training algorithms. Stochastic sub-sampling during training decreases computational cost and allows the optimization algorithms to progress over local minima. However, it also results in discontinuous cost functions. Minimization line searches are not effective in this context, as they use a vanishing derivative (first order optimality condition), which often do not exist in a discontinuous cost function and therefore converge to discontinuities as opposed to minima from the data trends. Instead, we base candidate solutions along a search direction purely on gradient information, in particular by a directional derivative sign change from negative to positive (a Non-negative Associative Gradient Projection Point (NN-GPP)). Only considering a sign change from negative to positive always indicates a minimum, thus NN-GPPs contain second order information. Conversely, a vanishing gradient is purely a first order condition, which may indicate a minimum, maximum or saddle point. This insight allows the learning rate of an algorithm to be reliably resolved as the step size along a search direction, increasing convergence performance and eliminating an otherwise expensive hyperparameter.

\keywords{Optimization \and Artificial Neural Networks \and Line Search \and Discontinuous \and Loss Function \and Stochastic Sub-Sampling}
\end{abstract}

\section{Introduction}
\label{intro}

The aim of this paper is to compare gradient-only line searches to minimization line searches in the context of the non-convex, discontinuous cost functions encountered in Artificial Neural Network training. With the ability to conduct line searches in this environment, it is possible to automatically resolve step sizes (learning rates) adaptively, as determined by cost function characteristics. To this end, we introduce the various concepts needed for our study.

\subsection{Cost Functions and Gradients}

Machine learning models are often posed with objective functions of the form
\begin{eqnarray}
\mathcal{L}(\mathbf{x}) = \frac{1}{M} \sum_{k=1}^{M} \ell (\mathbf{x};\;\mathbf{t}_k),
\label{eq:loss}
\end{eqnarray}
where $\mathbf{x}\in \mathcal{R}^{d}$ is a vector of parameters, $\{\mathbf{t}_1,\dots,\mathbf{t}_k\}$ is a training set, and $\ell(\mathbf{x};\;\mathbf{t})$ is a loss quantifying the performance of
parameters $\mathbf{x}$ on training sample $\mathbf{t}$. The exact gradient w.r.t.  $\mathbf{x}$ is then computed by back-propagation \cite{Werbos1994}
\begin{eqnarray}
\nabla\mathcal{L}(\mathbf{x}) = \frac{1}{M} \sum_{k=1}^{M} \nabla\ell (\mathbf{x};\;\mathbf{t}_k).
\label{eq:lossgrad}
\end{eqnarray}
For large $M$, the exact gradient can become computationally demanding to compute. Instead, a
(mini-)batch sample, $\mathcal{B} \subset \{1,\dots,M\}$ of size $|\mathcal{B}| \ll M$ is selected from the
training set to compute an approximate loss function
\begin{equation}
L(\mathbf{x}) = \frac{1}{|\mathcal{B}|} \sum_{k\in \mathcal{B}} \ell (\mathbf{x};\;\mathbf{t}_k),
\label{eq:lossgradbatch}
\end{equation}
and approximate stochastic gradient
\begin{equation}
\mathbf{g}(\mathbf{x}) = \frac{1}{|\mathcal{B}|} \sum_{k\in \mathcal{B}} \nabla\ell (\mathbf{x};\;\mathbf{t}_k),
\label{eq:g_lossgradbatch}
\end{equation}
with expectation $\E [ L(\mathbf{x}) ] = \mathcal{L}(\mathbf{x})$  and $\E [ \mathbf{g}(\mathbf{x}) ] = \nabla\mathcal{L}(\mathbf{x})$ \cite{Tong2005}.
Usually, the batch or mini-batch is drawn uniformly and independently from $M$, but it has been noted that mini-batch sampling may lead to the non-convergence of certain algorithms \cite{Balles2018}, which can be rectified using stratified or active mini-batch sampling \cite{Zhang2018}. In addition to cost, mini-batch sampling may alleviate the presence of some local minima, as neural network objective functions have been reported to be multi-modal \cite{Dauphin2014,Goodfellow2015,Choromanska2015,Saxe2013} and therefore aid convergence \cite{Ruder2016}. 

Hence, since $L$ may be non-differentiable due to discontinuities, we weaken the notion of gradients or derivatives to associated derivative and associated gradient \cite{Wilke2013,Snyman2018} as defined in Definitions~\ref{def:ad} and \ref{def:ag}.

\textit{Notation:}
Explicit dependency on variables are occasionally omitted e.g.\ $L$ instead of  $L(\mathbf{x})$. Sequences are denoted by subscripts e.g. $\mathbf{g}_i$, denoting the $i^\textrm{th}$ iterate. Entry-wise products also known as the Hadamard or Schur products \cite{Davis1962} are explicitly denoted using $\odot$. 

\begin{definition}\label{def:ad}
	Let $f:X\subset\mathbb{R}\rightarrow\mathbb{R}$ be a {piecewise} smooth real univariate step-discontinuous function that is everywhere defined. The \emph{associated derivative} $f^{\prime}(x)$ for $f(x)$ at a point $x$ is given by the derivative of $f(x)$ at $x$ when	$f(x)$ is differentiable at $x$. The \emph{associated derivative} $f^{\prime}$ for $f(x)$ non-differentiable at $x$, is given by the left-sided derivative of $f(x)$ when $x$ is associated with the {piecewise} continuous section {of the function to the left of the discontinuity, otherwise it is given by the right-sided derivative.}
\end{definition}

\begin{definition}
	\label{def:ag}
	Let $f:X\subset\mathbb{R}^n\rightarrow\mathbb{R}$ be a {piecewise} continuous function that is everywhere defined. The \emph{associated gradient} $\mathbf{g}(\boldsymbol{x})$ for $f(\boldsymbol{x})$ at $\boldsymbol{x}$ is given by the gradient of $f(\boldsymbol{x})$ at $\boldsymbol{x}$ when $f(\boldsymbol{x})$ is differentiable at $\boldsymbol{x}$. The \emph{associated gradient} $\mathbf{g}(\boldsymbol{x})$ for $f(\boldsymbol{x})$ non-differentiable at $\boldsymbol{x}$ is defined as the vector of partial derivatives with each partial derivative as defined in Definition~\ref{def:ad}. 
\end{definition}
It follows from Definitions~\ref{def:ad} and \ref{def:ag} that the \emph{associated gradient} reduces to the gradient for everywhere differentiable functions. Derivative and gradient is from here on taken to imply respectively associated derivative and associated gradient. Directional derivative denotes a derivative or gradient vector projected along a direction. Gradient vectors are assumed to be column vectors and the vector transpose is indicated by superscript T.

\subsection{Optimization Formulations: Smooth Case Using Full Batch}

In general there are three formulations to define the solution or candidate solutions to an optimization problem using \eqref{eq:loss} and \eqref{eq:lossgrad}, namely:
\begin{enumerate}
	\item Formulation 1: direct minimization of \eqref{eq:loss} \cite{Floudas2009,Arora2011,Snyman2018},
	\item Formulation 2: finding stationary points of \eqref{eq:lossgrad}, known as the optimality criterion \cite{Floudas2009,Arora2011,Snyman2018}, and
	\item Formulation 3: finding non-negative gradient projection points, $\mathbf{x}_{nngpp}$, \cite{Wilke2013,Snyman2018}, as defined by the non-local optimality condition in Definition~\ref{def:generalderivativecriticalpoint}.  
\end{enumerate}

\begin{definition}
	\label{def:generalderivativecriticalpoint}
	Suppose that $f:X\subset \mathbb{R}^n \rightarrow \mathbb{R}$ is a	real-valued function for which the {\emph{associated gradient}} field $\nabla f(\boldsymbol{x})$ is uniquely defined for every $\boldsymbol{x}\in X$. Then, a point $\boldsymbol{x}_{nngpp}\in X$ is a non-negative associated gradient projection point (NN-GPP) if there exists a real number $r_u>0$ for every $\boldsymbol{u}\in\{\boldsymbol{y}\in\mathbb{R}^n \; / \;
	\|\boldsymbol{y}\| = 1\}$ such that	$$\nabla f^\textrm{T}(\boldsymbol{x}_{nngpp} + \lambda\boldsymbol{u})\boldsymbol{u} \geq 0,	\;\forall\;\lambda\in(0,r_u].$$
\end{definition}

The non-negative associated gradient projection point is named after the directional derivative $\nabla f^\textrm{T}(\mathbf{x}_{nngpp} + \lambda\mathbf{u})\mathbf{u}$ that is required to be non-negative for all multi-dimensional directions $\mathbf{u}$, where the gradient is evaluated away from $\mathbf{x}_{nngpp}$ at $\mathbf{x}_{nngpp} + \lambda\mathbf{u}$. 

\begin{figure}[h!]
	\centering
	\begin{subfigure}{.5\textwidth}
		\centering
		\includegraphics[width=0.85\linewidth]{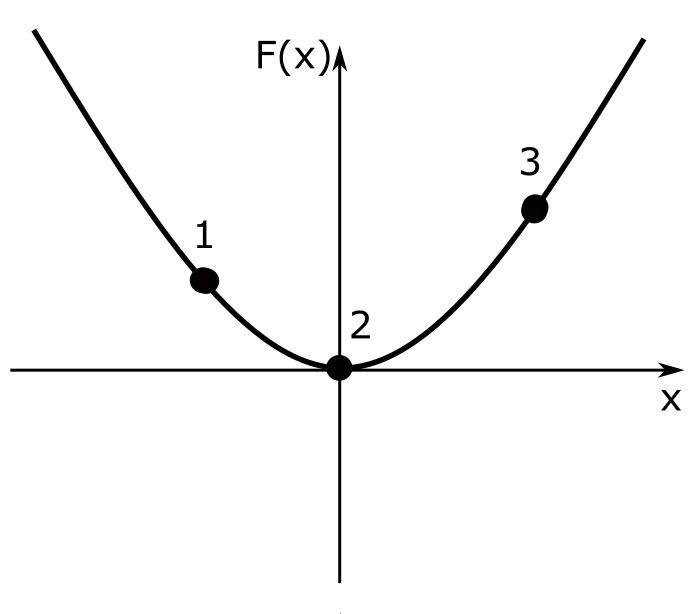}
		\caption{Univariate loss function $F(x)$.}
		\label{fig_ppp_f_diag}
	\end{subfigure}%
	\begin{subfigure}{.5\textwidth}
		\centering 
		\includegraphics[width=0.95\linewidth]{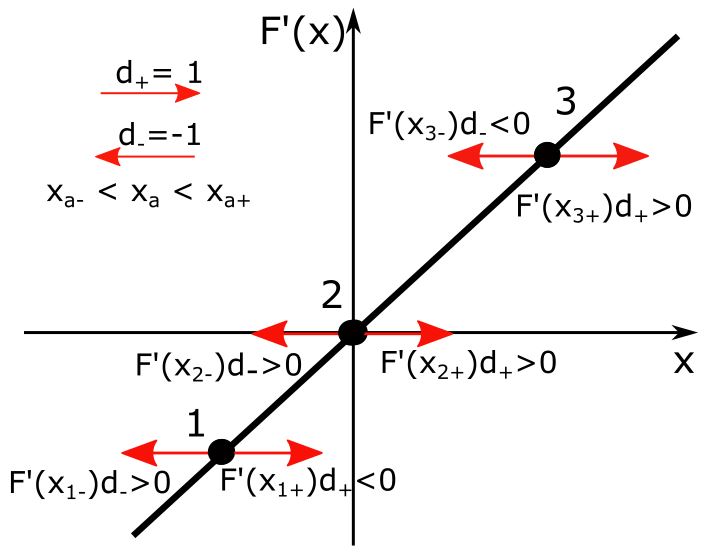}
		\caption{Loss derivative function $F'(x)$.}
		\label{fig_ppp_dd_diag}
	\end{subfigure}%
	\caption{(a) Univariate loss function and (b) derivative function. The non-negative projection point, $x_{nngpp}$, (Formulation 3) for the example quadratic function coincides with $x_{nngpp}= x_2$, at which which the derivative is also zero (Formulation 2) and the function value is a minimum (Formulation 1). Non-negative gradient projection points (NN-GPP) are defined using a non-local optimality formulation that requires all directional derivatives of points in a non-empty ball, $B_\mu(x) = \{x \in \mathcal{R} : |x-x_{nngpp}| <\mu,\;\mu>0,\;x\neq x_{nngpp}\}$ around the NN-GPP to be non-negative. The directional derivative, $F'(x)d$, of a point in the ball $x\in B_\mu$ is defined by the direction vector that connects the non-negative projection point $x_{nngpp}$ to the point in the ball $x$, i.e.\ $d = \frac{x-x_{nngpp}}{\|x-x_{nngpp}\|}$, which is then projected onto the derivative evaluated at the point in the ball $F'(x)$.}
	\label{fig_ppp}
\end{figure}

Formulations 1-3 are well illustrated by the univariate loss function $F(x)$ depicted in Figure~\ref{fig_ppp}(a) with (b) loss derivative function: 
\begin{enumerate}
	\item Formulation 1: $x_2$ is the minimum of $F(x)$.
	\item Formulation 2: The only stationary point (candidate minimum) is $x_2$, since the necessary condition $F'(x_2) = 0$ holds. The candidate solution is an actual minimum if in addition to $F'(x_2) = 0$ the following holds $F''(x_2) > 0$,  which  together forms the sufficiency conditions for $x_2$ to be a local minimum.
	\item Formulation 3: The only non-negative projection point is $x_2$, consider the point $x_1 \in B_\mu(x)$, with one-dimensional direction $d_1 =  \frac{x_1-x_2}{\|x_1-x_2\|} = -1 < 0$ and derivative $F'(x_1) < 0$ that results in the directional derivative $F'(x_1) d_1 > 0$. Similarly,  $x_3 \in B_\mu(x)$, with direction $d_3 =  \frac{x_3-x_2}{\|x_3-x_2\|} = 1 > 0$ and derivative $F'(x_3) > 0$ that results in the directional derivative $F'(x_3) d_3 > 0$. Hence, any point $x \in B_\mu(x)$ results in a non-negative directional derivative. Note that neither $x_1$ or $x_3$ is a non-negative projection point, since there are points to the right of $x_1$ that result in negative directional derivatives ($d > 0$ while $F'(x)<0$) and there are points to the left of $x_3$ that result in negative directional derivatives ($d < 0$ while $F'(x) > 0$). The non-negative projection point $x_2$ defines a local minimum without having to compute second order information \cite{Wilke2013,Snyman2018}, as positive directional derivative implies increase in function along that direction.
\end{enumerate}

Consider Figures~\ref{fig_ppp_example}(a) and (b) for the application of Formulations 1-3 to the actual loss function of the full training set of the Glass1 dataset \cite{Prechelt1994}, for step length $\alpha$ along the steepest descent direction, $-\mathbf{g}(\mathbf{x}_0)$, evaluated at $\mathbf{x}_0$. Figure~\ref{fig_ppp_example}(a)  depicts the actual univariate loss function along the search (descent) direction, i.e.\ $$F(\alpha) = \mathcal{L}(\mathbf{x}_0 - \alpha \mathbf{g}(\mathbf{x}_0)),$$ with directional derivative,$$F'(\alpha) = -\left(\nabla \mathcal{L}(\mathbf{x}_0 - \alpha \nabla \mathcal{L}(\mathbf{x}_0))\right)^\textrm{T} \nabla \mathcal{L}(\mathbf{x}_0),$$ along the search direction depicted in Figure~\ref{fig_ppp_example}(b). The minimum, stationary point and non-negative projection point is at $\alpha = 2.5$. 

Formulations 1-3 have been shown to be equivalent \cite{Wilke2013}, when restricting  \eqref{eq:loss} to the class of convex functions. Formulation 1 is solved by direction minimization of \eqref{eq:loss}, while formulation 2 requires the roots of \eqref{eq:lossgrad} to be computed. Formulation 3 is solved by finding sign changes along directional derivatives of descent directions from negative to positive until no more descent directions remain. Note sign changes from negative to positive along a search direction implies
a minimum without requiring the directional derivative to be zero at the minimum. This distinction is important when considering discontinuous loss functions that occur when conducting mini-batch sampling.

\begin{figure}[h!]
	\centering
	\begin{subfigure}{.5\textwidth}
		\centering
		\includegraphics[width=0.95\linewidth]{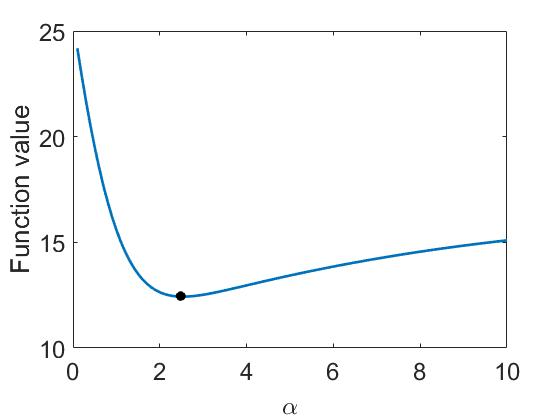}
		\caption{Loss function.}
		\label{fig_full_train_fval}
	\end{subfigure}%
	\begin{subfigure}{.5\textwidth}
		\centering 
		\includegraphics[width=0.95\linewidth]{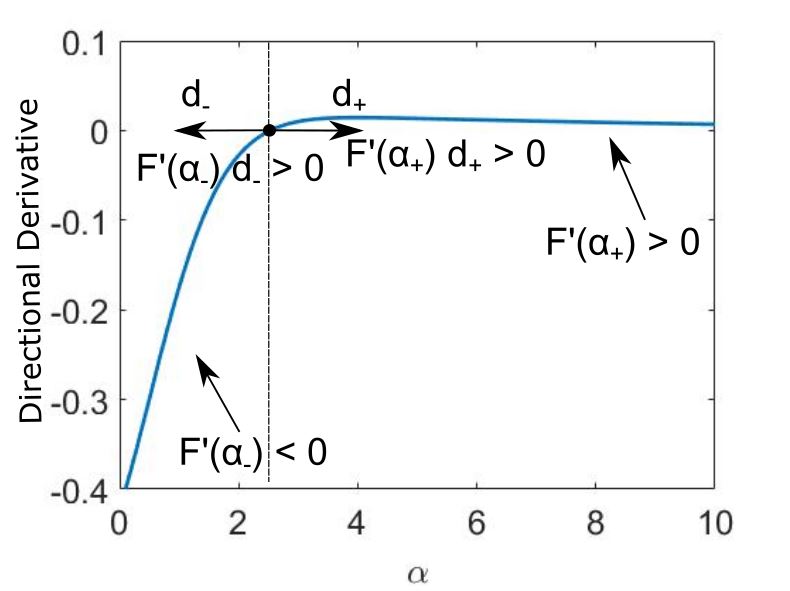}
		\caption{Directional derivative.}
		\label{fig_full_train_grad}
	\end{subfigure}%
	
	\caption{
		Typical loss function along the steepest descent direction in neural net training, with (a) function value and  
		corresponding (b) directional derivative along the descent direction as a function of step size $\alpha$. The problem is constructed using the full training data of the Glass1 dataset \cite{Prechelt1994} (for details see Section 3.5). The local minimum is indicated as a black dot in (a), while the stationary point and non-negative projection point are indicated by a black point in (b).}
	\label{fig_ppp_example}
\end{figure}

\subsection{Optimization Formulations: Discontinuous Case Using Sub-Sampled Mini-Batches}

In the case of randomly sampling a new mini-batch at every approximate loss evaluation and gradient, $L(\boldsymbol{x})$ and $\boldsymbol{g}(\boldsymbol{x})$, discontinuities occur between evaluations. These are stochastic in nature, resulting in spurious local minima being found at discontinuities. Additionally, the probability of a stationary point ($\boldsymbol{g}(\boldsymbol{x}) = 0$) existing at a full-batch minimum, $\boldsymbol{x^*}$, is low, since sub-sampled losses might have gradients close to, but not equal to zero. Although the NN-GPP definition is robust to the presence of discontinuities \cite{Wilke2013,Snyman2018}, the stochasticity introduced due to continually sampling different mini-batches causes the locations of NN-GPPs to no longer be unique. We therefore relax and generalize its definition to the {\it Stochastic} NN-GPP as follows:

\begin{definition}
	\label{def:stochasticderivativecriticalpoint}
	Suppose that $f:X\subseteq \mathbb{R}^p \rightarrow \mathbb{R}$ is a real-valued function for which the \emph{associated gradient} $\nabla f(\boldsymbol{x})$ is uniquely defined for every $\boldsymbol{x}\in X$. Then, a point $\boldsymbol{x}_{snngpp}\in X$ is a stochastic non-negative associated gradient projection point (SNN-GPP) if there exists a real number $r_u>0$ for every $\boldsymbol{u}\in\{\boldsymbol{y}\in\mathbb{R}^p \; / \;
	\|\boldsymbol{y}\| = 1\}$ such that	$${\nabla f}^\textrm{T}(\boldsymbol{x}_{snngpp} + \lambda\boldsymbol{u})\boldsymbol{u} \geq 0,	\;\forall\;\lambda\in(0,r_u],$$ with a
	non-zero probability.
\end{definition}
Therefore, every NN-GPP is also a SNN-GPP, as a NN-GPP satisfies the above definition with probability 1. However, the stochastic nature of the loss results in all SNN-GPPs of a given neighbourhood on the loss surface being contained in a ball with radius smaller than $r_u$. Thus, we defined ball, $B_\epsilon$ as follows:

\begin{definition}
	Consider a mini-batch sub-sampled loss function ${L}(\boldsymbol{x})$, of a continuous, smooth and convex full-batch loss function $\mathcal{L}(\boldsymbol{x})$, such that $\E [ L(\mathbf{x}) ] = \mathcal{L}(\mathbf{x})$  and $\E [ \mathbf{g}(\mathbf{x}) ] = \nabla\mathcal{L}(\mathbf{x})$ \cite{Tong2005}. The minimum of the full-batch loss is $\boldsymbol{x^*}$. Due to the variance of $\mathcal{L}(\mathbf{x})$ at full-batch minimum $\mathbf{x^*}$, there exists a ball, $B_\epsilon(\boldsymbol{x}) = \{\boldsymbol{x} \in \mathbb{R}^p : \|\boldsymbol{x}-\boldsymbol{x}^*\|_2 <\epsilon,\;0<\epsilon<\infty,\;\boldsymbol{x}\neq \boldsymbol{x}^* \}$, that contains all the stochastic non-negative gradient projection points (SNN-GPPs).
	\label{thm:ball}
\end{definition}

To illustrate these concepts, reconsider the Glass1 dataset \cite{Prechelt1994}, with the exception of sampling using mini-batches with stochastic sub-sampling to compute the loss function and gradients. Figures~\ref{fig_minima_comp}(a) and (b) show the function values and directional derivatives along the steepest descent direction for different mini-batch sizes $P \in [10,30,50]$ and compares these to the full dataset (All Data). 

Firstly, it is evident that the loss function and directional derivative along the search direction are both discontinuous as a result of sampling induced discontinuities. The equivalence between Formulations 1-3 having been addressed in the previous section when using the full batch, the differences between Formulations 1-3 are now quantified when using stochastically sub-sampled batches to approximate the loss function and directional derivative along a search direction, as depicted in Figures~\ref{fig_minima_comp}(a) and (b). It is evident that defining the solution as the minimum of \eqref{eq:loss} is both problematic and not representative of the underlying solution as shown by (All Data). Similarly, considering Formulation 2 requires the directional derivative to be zero, which evidently does not exist and also for which the point with smallest directional derivative is not necessarily a good approximation to the All Data solution. Evidently SNN-GPPs are significantly better approximations to the All Data solution.  
SNN-GPPs  are localized in a ball around the All Data solution, 
due to the stochastic nature of the problem. The larger the batch size the smaller the ball radius as indicated by the vertical dotted lines in Figure~\ref{fig_minima_comp}(b) for the corresponding $P$ values averaged over 100 reconstructions.

In addition, we count the average number of local minima and SNN-GPPs  over 100 reconstructions of the line search $F(\alpha)$ over $0 \leq \alpha \leq 10$. For each run along the search direction, 100 steps are taken in increments of 0.1, where a function value and gradient evaluation is performed at each increment. The local minima and SNN-GPPs are then counted within the given range as depicted in Figure~\ref{fig_minima_comp}(c) for the Glass1 dataset. The shaded region gives an indication of the standard deviation and the solid line denotes the mean. When the full dataset is considered, both the number of local minima and SNN-GPP are one and equal to the single local minimum located at $\alpha=2.5$. It is clear, that the SNN-GPPs  indicate only one solution for a much smaller batch size than the minimum of the function. Overall the number of solutions for  SNN-GPPs  are around half the local minima present in the loss function using single sample mini-batches. In addition to the number of solutions, the spatial location of the solutions along the line search is quantified in Figure~\ref{fig_minima_comp}(b) for a batch size of 10. The SNN-GPPs  are localized to the domain $2 \leq \alpha \leq 3$, that is clustered around $\alpha=2.5$, whereas the local minima of the approximated loss function are scattered over almost the entire domain $0.5 \leq \alpha \leq 10$.

\begin{figure}[h!]
	\centering
	\begin{subfigure}{.5\textwidth}
		\centering
		\includegraphics[width=0.95\linewidth]{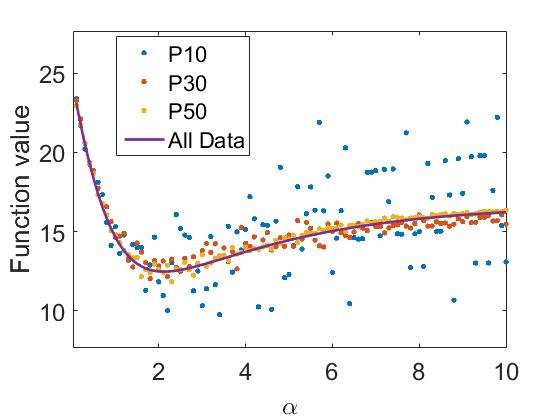}
		\caption{Function values.}
		\label{fig_pcomp_fval}
	\end{subfigure}%
	\begin{subfigure}{.5\textwidth}
		\centering 
		\includegraphics[width=0.95\linewidth]{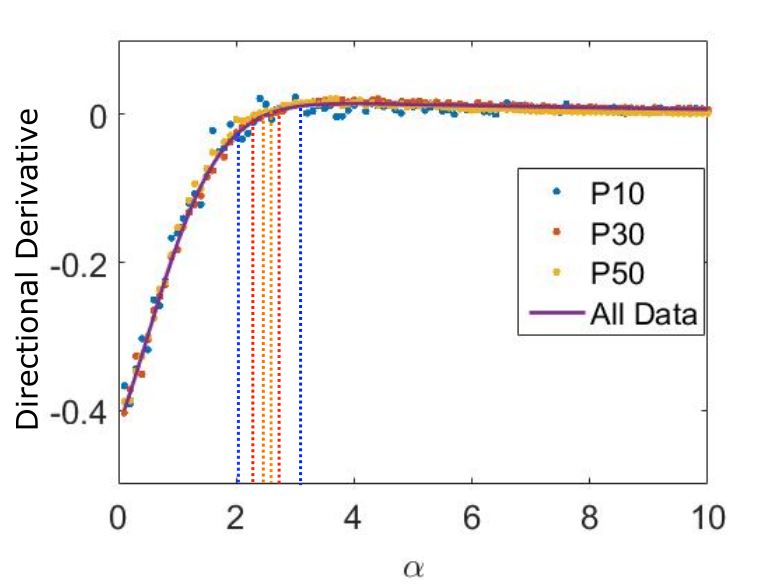}
		\caption{Directional derivatives.}
		\label{fig_pcomp_grad}
	\end{subfigure}%

	\centering
	\begin{subfigure}{.5\textwidth}
		\centering
		\includegraphics[width=0.95\linewidth]{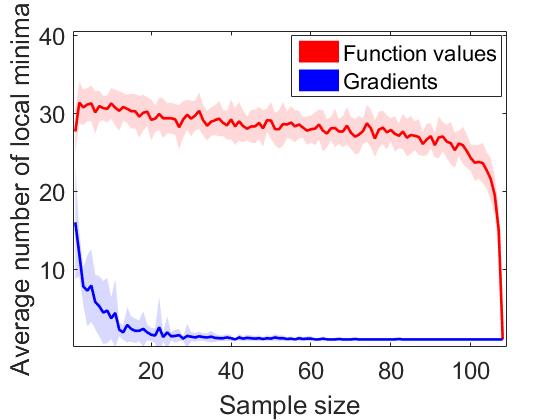}
		\caption{Minima per sample size.}
		\label{fig_glass_l20_K3_stp1}
	\end{subfigure}%
	\begin{subfigure}{.5\textwidth}
		\centering 
		\includegraphics[width=0.95\linewidth]{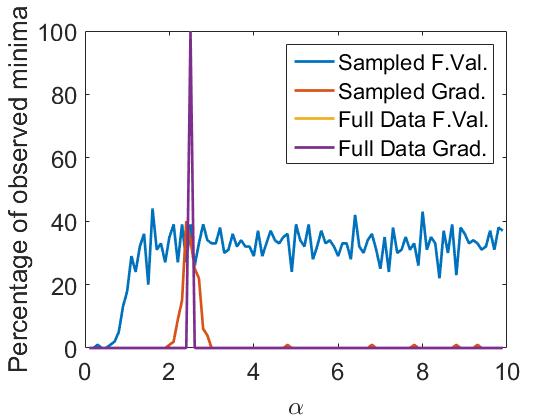}
		\caption{\% local min. along search direction.}
		\label{fig_l1000_inc100_stp1_glass2}
	\end{subfigure}%
	\caption{(a) Loss function and (b) directional derivative along the steepest descent direction with step size $\alpha$ for different mini-batch sizes, namely, 10, 30 and 50 samples per batch stochastically sub-sampled. The size of ball, $B_\epsilon$, along the search direction is also indicated in (b) by vertical lines  for different batch sizes. 
		In addition, the (c) average number of local minima as a function of sample size for $0\leq \alpha \leq 10$, and (d) the spatial location of local minima along the search direction are indicated. Evidently, as the sample size increases, the number as well as location of the minima become consistent. Variance in the directional derivative is far less affected by mini-batch sampling than the loss function, making strategies that sensibly rely on gradient or derivative information better suited for line searches, albeit that the variance has implications for exact and inexact search strategies.}
	\label{fig_minima_comp}
\end{figure}

Evidently, SNN-GPPs  are better generalizers to the All Data solution than the minimum of the approximated loss function. The implication is that line searches that aim at finding SNN-GPPs should perform better than line searches that aim to minimize along a search direction. Away from the domain of SNN-GPPs, sign changes are negligibly scarce in the directional derivative and instead are centered around the All Data solution in a ball, $B_\epsilon$, due to the stochastic nature of the problem. This has implications primarily for exact line searches, where we can only expect to resolve SNN-GPPs to within $B_\epsilon$.

\subsection{Related Work}

Supervised machine learning is often divided into "noisy" stochastic optimization problems \cite{Byrd2012} related to very small batch samples (typically a single data point) and batch averaged approximation regimes that utilize larger batch sizes. As demonstrated in this paper, both problems are discontinuous, with the size of the discontinuities decreasing as the batch size increases. Additional approaches to reduce the discontinuity size include dynamic sample sizes i.e.\ increasing the sample size as the optimizer converges \cite{Byrd2012} that results in a smooth and continuous function in the limit of maximum sample size i.e.\ full batch. As dynamic sub-sampling requires the sample size to grow to the limit of the full batch it is not well suited for mini-batch applications with memory constrained datasets. As demonstrated in this study, the generalization of the optimization problem to find SNN-GPPs  allows for a framework that is well suited for stochastic or active sub-sampling \cite{Zhang2018} over the entire optimization run. Active sub-sampling \cite{Zhang2018} is another approach to reduce the discontinuity size, in which the training data is split into additional datasets based on their error and then effectively sub-sampled stochastically. Active sub-sampling \cite{Zhang2018} can also address the non-convergence issues \cite{Balles2018} of well known machine learning optimizers such as Adam \cite{Kingma2015}. Global optimization strategies such as particle swarm optimization \cite{Engelbrecht2005}, genetic algorithms \cite{Montana1989} and Bayesian combined genetic programming approaches \cite{Marwala2007} are useful in the context of highly non-linear and multi-modal problems while only relying on loss function evaluations. However, these methods are well-known to be computationally demanding. 

Stochastic gradient algorithms were introduced by Robbins-Monro \cite{Robbins1951}, and include developments by Nesterov's dual averaging method \cite{Nesterov2009}. Subgradient methods introduced by Shor \cite{Shor1985,Shor1985a} are closely related to stochastic gradient algorithms. In fact, stochastic gradient descent (SGD) is a classical subgradient method i.e.\ steepest descent with a fixed learning rate \cite{Bertsekas2015}. In turn, subgradient methods are closely related to the newly coined proximal-gradient methods \cite{Bertsekas2015}. All these methods make use of {\it a priori} selected step sizes that are either constant or follow some schedule with known convergence characteristics. These learning rate parameters remain the most sensitive to successful configuration \cite{Bergstra2012} and are currently determined either by the user on a "trial and error" basis, or by computationally expensive automated means \cite{Snoek2012,Bergstra2011,Bergstra2012,Jaderberg2017}. An attempt to incorporate an adaptive learning rate using an inexact line search strategy based on the Wolfe conditions requires dynamic sub-sampling that iteratively increases to the batch size from small to the full batch size \cite{Byrd2012} to finally ensure a smooth and continuous problem that has well-known convergence characteristics. Another approach has been to conduct probabilistic line searches in a Bayesian optimization framework \cite{Mahsereci2017b}. All the approaches discussed so far are referred to as function minimizers i.e.\ they aim to find the minimum function value of the loss function.  

Mini-batch of sub-sampling have seen limited application of line search as they perform poorly or do not converge since the underlying assumptions on which the line searches were developed do not apply for stochastic sub-sampling. Theoretical developments of convergence proofs and estimating theoretical convergence rates include the well-known linear convergence rate of gradient descent methods  for strongly-convex loss functions. Sub-linear convergence rates are achieved when $f$ is only convex \cite{Karimi2016}. A number of alternatives to strong convexity have been presented that include error bounds \cite{Luo1993}, essential strong convexity \cite{Liu2015}, weak strong convexity \cite{Gong2014}, restricted Secant inequality \cite{Zhang2013}, quadratic growth with the Mangasarian-Fromovitz constraint qualification  \cite{Anitescu2000}, Polyak-Lojasiewicz condition \cite{Karimi2016} and associated derivative descent sequences \cite{Wilke2013}. In line with the discontinuous nature of the stochastic loss function, associated derivative descent sequences are not based on assumptions of Lipschitz continuity or convexity but only assumes associated derivative unimodal functions with convergent subsequences.

An alternative to function minimizers are gradient-only optimization strategies that solves for SNN-GPPs  as defined by the gradient-only optimization problem \cite{Wilke2013,Snyman2018}. SNN-GPPs  were specifically proposed to define solutions for discontinuous functions. In this study we consider line searches specifically developed to find SNN-GPPs. We base our convergence proofs on associated derivative descent sequences.

\subsection{Contribution}

The gradient-only optimization problem which solves SNN-GPPs  \cite{Wilke2013,Snyman2018} is a generalized problem, which applies from full batch training (or conventional minimization problem) to mini-batch training (or discontinuous optimization problem). In practice SNN-GPPs have been resolved using gradient-only line search (GOLS) strategies \cite{Wilke2013,Snyman2018}. In this paper we adopt this approach to automatically and adaptively determine the learning rates during supervised mini-batch neural network training, and demonstrate its effectiveness and generality by comparing GOLS to minimization line search strategies over a large number of ANN problems.

\section{Method}

As the emphasis of this study is on resolving the step size and not the implications of search directions, we restrict ourselves
to the mini-batch stochastic gradient descent algorithm. We consider exact and inexact line searches that rely on both loss function values \cite{Floudas2009,Arora2011,Snyman2018} and gradients-only line search strategies \cite{Wilke2013,Snyman2018}. Additional benefits for considering line searches to resolve the step size include the potential to consider higher order algorithms such as conjugate gradient and Quasi-Newton approaches to resolve curvature information regarding the problem \cite{Floudas2009,Arora2011,Snyman2018}.

\subsection{Stochastic Gradient Descent with Line Search Strategies}

Consider stochastic gradient descent (SGD) algorithm as outlined:

\begin{algorithm}
	\DontPrintSemicolon 
	Select starting point, $\mathbf{x}_0$, and set $n = 0$.\;
	Evaluate $L(\mathbf{x}_n)$ and $\mathbf{g}(\mathbf{x}_n)$.\;
	Define the search direction, $\mathbf{d}_n = -\mathbf{g}(\mathbf{x}_n)$.\;
	Assign step length, $\alpha_n$.\;
	Define $\mathbf{x}_{n+1} = \mathbf{x}_n + \alpha_n  \mathbf{d}_n$.\;
	$n = n + 1$,\;
	Continue when stop criterion and limit on number of iterations have not been met else stop.\;
	Repeat steps 2 to 8.\;
	
	\caption{{\sc SGD} Stochastic gradient descent}
	\label{alg_SGD}
\end{algorithm}

Line searches allow for the step size $\alpha_n$ to be resolved along a search direction $\mathbf{d}_n$ from a starting point $\mathbf{x}_n$, which requires resolving $\alpha$ for the univariate function
\begin{equation}
F(\alpha) = L(\mathbf{x}(\alpha)) = L(\mathbf{x}_n + \alpha  \mathbf{d}_n), 
\label{eq_linearised}
\end{equation}
with associated directional derivative along the search direction $\mathbf{d}_n$,
\begin{equation}
F^\prime(\alpha) =  \frac{d F(\alpha)}{d \alpha} = \mathbf{g}(\mathbf{x}(\alpha))^\mathrm{T}\frac{d \mathbf{x}(\alpha)}{d \alpha}  =\mathbf{g}(\mathbf{x}_n + \alpha \cdot \mathbf{d}_n)^\mathrm{T} \mathbf{d}_n.
\label{eq_dlinearised}
\end{equation}
Typically line searches are minimizers i.e.\ find 
\begin{equation}
\arg\min_{\alpha \in \mathcal{R}} f(\alpha),
\end{equation}
which defines functional value based line searches. Alternatively, line searches can be used to find
SNN-GPPs i.e.\ 
\begin{equation}
{\arg \alpha_{nngpp}}  := \left\{\alpha \in \mathcal{R} : F^\prime(\alpha+ \Delta \alpha) \Delta \alpha \geq 0 \right | \forall \; |\Delta\alpha| < r | r > 0 \},
\end{equation}

to form the class of gradient-only line search strategies. For both approaches the desired points can be resolved within a small tolerance to give an exact line search, 
or approximately, which results in inexact line searches.

\subsection{Exact line searches: Function Value based Golden Section (GS) and Bisection Gradient-only Line Search (B-GOLS)}

Function value based line searches were developed for twice differentiable smooth loss functions with the assumption that the bracketing phase only bounds one minimizer. Similarly, B-GOLS was initially developed to bound a single SNN-GPP  for static discontinuous functions. As a result of the stochastic nature of the loss function the usage of an exact line search is a misnomer as the minimum or sign change is assumed to occur in a non-zero $n$-dimensional ball that is an implied dense set. Here, exact line search implies converging to a likely minimizer or 
SNN-GPP within this $n$-dimensional ball. The exact line search strategies are considered for comparative purposes in this study.

Exact line searches first bracket a candidate solution and then refine the interval to find the minimum or SNN-GPP \cite{Arora2011}. Refinement of
a minimum requires significantly more computation than isolating a SNN-GPP  as illustrated in Figures~\ref{fig_diag_exacts}(a) and (b). Four points forming three intervals are required to isolate a local minimum \cite{Floudas2009,Arora2011,Snyman2018}). The optimal reduction is Golden Section reduction that reduces the interval by 38\% per iteration \cite{Floudas2009,Arora2011,Snyman2018}. Isolating a SNN-GPP is equivalent to isolating the directional derivative from negative to positive along the search direction which can be done using bisection i.e.\ three points forming two intervals that reduces the interval by 50\% per iteration \cite{Snyman2018}. Pseudo-code B-GOLS to find SNN-GPP is included in Appendix A. It is important to note that finding only a sign change from negative to positive in the directional derivative along the search direction enhances the robustness of gradient-only approaches against noise, since magnitude variations in directional derivatives are ignored for this exact line search. For both the GS and B-GOLS the step size for the location of minima and SNN-GPP were resolved to a tolerance of $10^{-12}$.

\begin{figure}[h!]
	\centering
	\begin{subfigure}{.5\textwidth}
		\centering
		\includegraphics[width=0.9\linewidth]{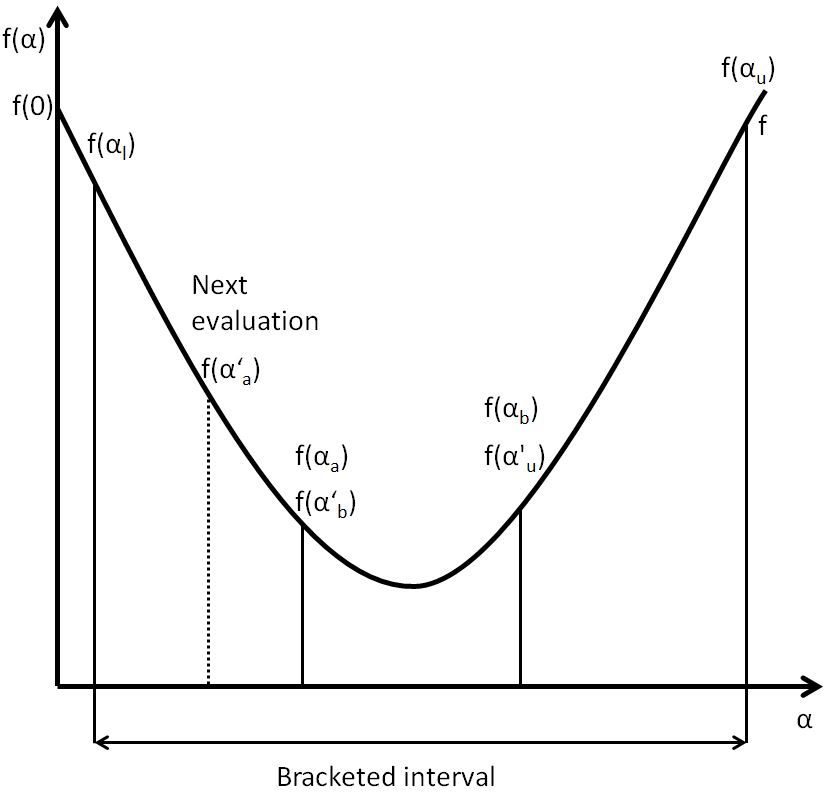}
		\caption{Golden Section}
		\label{fig_diag_golden}
	\end{subfigure}%
	\begin{subfigure}{.5\textwidth}
		\centering 
		\includegraphics[width=0.9\linewidth]{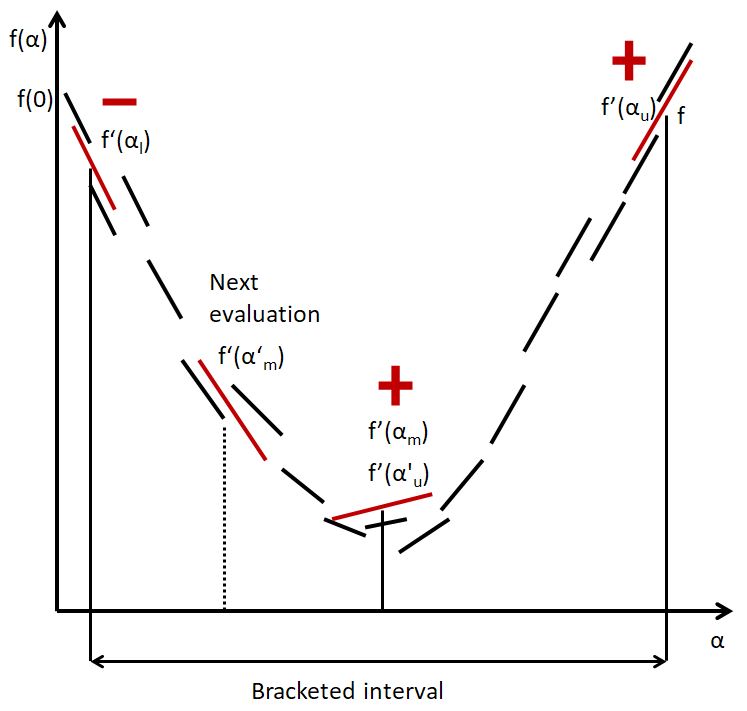}
		\caption{Bisection Gradient-only Line Search}
		\label{fig_diag_grad_exact}
	\end{subfigure}%
	\caption{Comparison between exact line searches that (a) minimize, such as the Golden Section (GS) method, versus (b) identify SNN-GPP by isolating sign changes from negative to positive using the Bisection Gradient-Only Line Search (B-GOLS). The directional derivatives along the search direction $d=1$ at the discrete points are indicated by the red line segments.}
	\label{fig_diag_exacts}
\end{figure}

\subsection{Inexact line search: Function value based Armijo's rule (ARLS) and inexact gradient-only line search (I-GOLS)}

Inexact line searches define ranges for acceptable steps that are not:
\begin{enumerate}
	\item too small by defining a lower bound for the steps, and
	\item not too large by defining an upper bound for the steps.
\end{enumerate}
Ensuring that large enough step sizes are taken is usually enforced by backtracking from large step sizes until an acceptable step size has been
found. Alternatively, step sizes are advanced from small to larger ones until the largest acceptable step size has been found. 
\begin{figure}[h!]
	\centering
	\begin{subfigure}{.5\textwidth}
		\centering
		\includegraphics[width=0.9\linewidth]{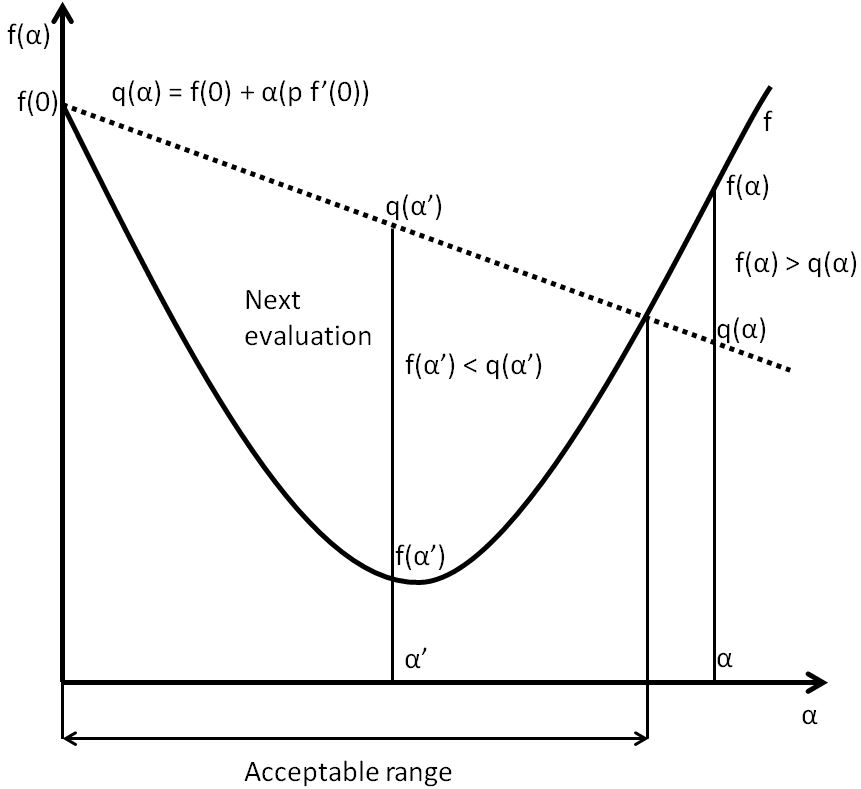}
		\caption{Function value inexact line search.}
		\label{fig_diag_armijo}
	\end{subfigure}%
	\begin{subfigure}{.5\textwidth}
		\centering 
		\includegraphics[width=0.89\linewidth]{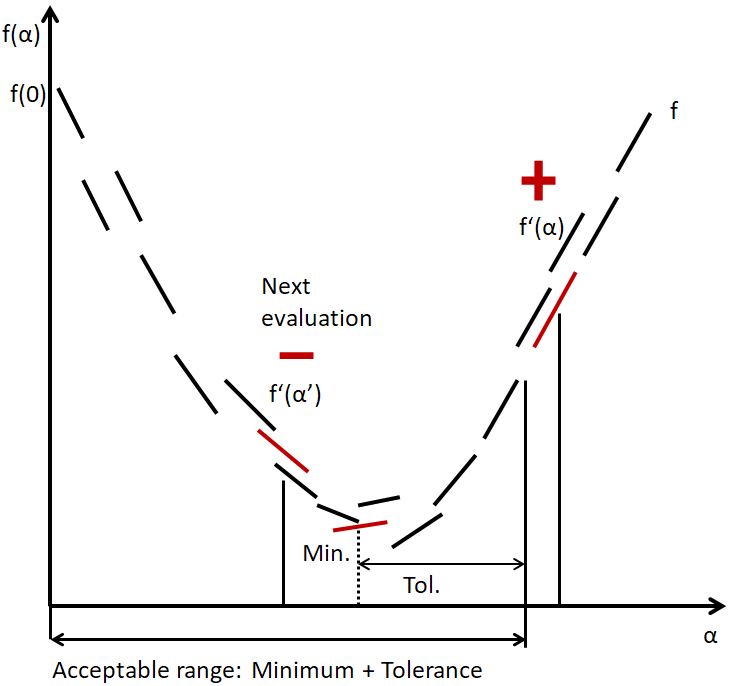}
		\caption{Gradient-only inexact line search.}
		\label{fig_diag_grad_inexact}
	\end{subfigure}%
	\caption{Schematic diagrams for (a)  the function value based inexact line search that is based on Armijo's rule (ARLS) and the (b) inexact gradient-only line search (I-GOLS) with the directional derivative slopes at the points of interest highlighted in red. Armijo's rule was developed for smooth functions, while the gradient-only inexact approach was developed for discontinuous functions.}
	\label{fig_diag_inexacts}
\end{figure}

The chosen function value based inexact line search (ARLS) is based on the practical and robust Armijo's Rule \cite{Arora2011} that defines an upper bound to a domain of acceptable steps
\begin{equation}
F(\alpha) < F(0) + \alpha  (p F'(0)),
\label{eq_qa}
\end{equation}
with $0 \leq p \leq 1$. For $p=0$ any value below the level-set $F(0)$ is allowed, and for $p=1$ the step-size for convex functions is reduced to 0. As a guideline $p=0.2$ is preferred.
In this study, a robust implementation of Armijo's Rule is realized that
ensures the largest feasible step size is taken as the update step. If the initial step is acceptable the step size is increased until the condition is not satisfied and the largest acceptable step taken as the step. Should the initial step fail Armijo's rule, then the step size is reduced using backtracking until the first acceptable step size. This selective employment of advancement or backtracking ensures the largest steps are taken. In this light, increases and decreases of the step size were made with a factor of 2. 

A gradient-only line search that allows for sampling beyond the actual SNN-GPP is simply, any $\alpha$ that satisfies:
\begin{equation}
\frac{d F(\alpha)}{d \alpha} \leq (1-r) |\frac{d F(0)}{d \alpha}|,
\end{equation}
with $0\leq r \leq 1$. Here, $r=0$ implies any update that reduces the magnitude of the directional derivative, whereas $r=1$  implies only derivative descent updates i.e.\ updates for which the directional derivative is negative at the update. Hence, I-GOLS may be sensitive to variance in gradient magnitude. The algorithmic details of this method are given in Appendix B.

As loss functions exist that have unbounded solutions we impose a maximum step size limit $\alpha_{max}$ on both ARLS and I-GOLS, in addition to a minimum step limit $\alpha_{min}$ given by
\begin{eqnarray}
\alpha_{max} &=& \min(\frac{1}{\|\mathbf{g}(\mathbf{x}_n)\|_2}, 10^7),\\
\alpha_{min} &=&  10^{-8}.
\label{eq_max_step}
\end{eqnarray}

The inexact line search strategies require an initial step size. For the very first iteration, this initial guess for $\alpha$ is set to $\alpha_{min}$. In subsequent iterations the initial guess is set to be the resolved step size of the previous iteration, $\alpha_{n-1}$.

\subsection{Theoretical Basis}

Before we present proofs of convergence of associated derivative descent sequences for descendible functions, we  present definitions for associate derivative (weakly) unimodal functions. Associate derivative unimodal functions include discontinuous functions with finite discontinuities but excludes piece-wise linear functions.

\begin{definition}
	\label{def:derivativedescent}
	For a given sequence $\{\boldsymbol{x}^{\{k\}} \in X \subset
	\mathbb{R}^n:k\in\mathbb{P}\}$ suppose $\mathbf{g}(\boldsymbol{x}^{\{k\}}) $ $\neq \boldsymbol{0}$ for some $k$ and
	$\boldsymbol{x}^{\{k\}}\notin B_\epsilon$ with $B_\epsilon$
	defined in Definition~\ref{thm:ball}. Then the
	sequence $\{\boldsymbol{x}^{\{k\}}\}$ is an {associated}
	derivative descent sequence for $f:X\rightarrow\mathbb{R}$, if an
	associated sequence $\{\boldsymbol{u}^{\{k\}} \in
	\mathbb{R}^n:k\in\mathbb{P}\}$ may be generated such that if
	$\boldsymbol{u}^{\{k\}}$ is a descent direction from the set of all
	possible descent directions at $\boldsymbol{x}^{\{k\}}$, i.e.\
	$\mathbf{g}^\textrm{T}
	(\boldsymbol{x}^{\{k\}})\boldsymbol{u}^{\{k\}} <0$ then
	\begin{equation}
	\mathbf{g}^\textrm{T} (\boldsymbol{x}^{\{k+1\}})\boldsymbol{u}^{\{k\}}
	< 0, \text{ for } \boldsymbol{x}^{\{k\}}\neq\boldsymbol{x}^{\{k+1\}}
	\end{equation}
\end{definition}

In this study we limit the functions to the class of descendible associated derivative
multivariate functions, which includes a number of piece-wise smooth step discontinuous functions 
but excludes piece-wise linear continuous functions.
\begin{definition}\label{def:descendable}
	A multivariate function
	$f:X\subset\mathbb{R}^n\rightarrow\mathbb{R}$ is descendible
	if $\mathbf{x}^{\{0\}}\in\mathbb{R}^n$ and
	$\{\mathbf{x}^{\{k\}}\}$ is an {associated} derivative
	descent sequence, as defined in
	Definition~\ref{def:derivativedescent}, for { $f$} with
	initial point $\mathbf{x}^{\{0\}}$, then every subsequence of
	$\{\mathbf{x}^{\{k\}}\}$ converges.  The limit of any convergent
	subsequence of $\{\mathbf{x}^{\{k\}}\}$ is SNN-GPP, as defined in
	Definition~\ref{def:generalderivativecriticalpoint}, of {
		$f$}.
\end{definition}
\begin{note}
	Performing exact gradient-only line searches result in strict associated derivative descent sequences, whilst our proposed inexact line search may or may not depending on the chosen parameter values. Although we consider parameter values in this study for which
	strict associated derivative descent sequences do not hold, the generated subsequences are convergent for the problems considered in this study.
\end{note}

\begin{theorem}
	Let $f:X\subseteq\mathbb{R}^n\rightarrow\mathbb{R}$ be a descendible associated derivative multivariate function as defined by Definition \ref{def:descendable}. Let $\mathbf{x}^{\{0\}}\in\mathcal{R}^n$ and	$\{\mathbf{x}^{\{k\}}\}$ be an {associated} derivative descent sequence, as defined in Definition~\ref{def:derivativedescent}, for {$f$} with initial point $\mathbf{x}^{\{0\}}$, then by definition every subsequence of 		$\{\mathbf{x}^{\{k\}}\}$ converges. The limit of any convergent	subsequence of $\{\mathbf{x}^{\{k\}}\}$ is a ball of {Stochastic non-negative {associated} gradient projection point (SNN-GPP)}, as defined in Definition~\ref{thm:ball}, of {$f$}.
\end{theorem}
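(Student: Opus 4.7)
The plan is to treat this theorem as the stochastic lift of Definition~\ref{def:descendable}, which already guarantees subsequence convergence and identifies the limit as a (deterministic) NN-GPP. Since the sub-sampled loss is discontinuous and stochastic, the limit can no longer be a unique point, so the task reduces to locating any such limit inside the ball $B_\epsilon$ of SNN-GPPs introduced in Definition~\ref{thm:ball}. I would proceed in three steps: invoke descendibility to produce a limit, use the characterization of points outside $B_\epsilon$ to show that descent along a fixed direction can always be continued there, and derive a contradiction with the assumed convergence.

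First I would apply Definition~\ref{def:descendable} verbatim: since $f$ is descendible and $\{\mathbf{x}^{\{k\}}\}$ is an associated derivative descent sequence from $\mathbf{x}^{\{0\}}$, every subsequence of $\{\mathbf{x}^{\{k\}}\}$ converges. Fix any such convergent subsequence $\{\mathbf{x}^{\{k_j\}}\}$ with limit $\mathbf{x}^\star$. The remainder of the argument characterizes where $\mathbf{x}^\star$ can lie. Second, by Definition~\ref{thm:ball}, $B_\epsilon$ contains \emph{every} SNN-GPP. Arguing by contrapositive, suppose $\mathbf{x}^\star\notin B_\epsilon$. Then $\mathbf{x}^\star$ fails the SNN-GPP condition of Definition~\ref{def:stochasticderivativecriticalpoint}, so there exists a unit direction $\mathbf{u}^\star$ and a sequence of arbitrarily small $\lambda>0$ for which $\mathbf{g}^{\mathrm{T}}(\mathbf{x}^\star+\lambda\mathbf{u}^\star)\mathbf{u}^\star < 0$ with probability one. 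In other words, $\mathbf{u}^\star$ is a persistent descent direction in a punctured neighbourhood of $\mathbf{x}^\star$.

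Third, I would transfer this persistent descent property to the tail of the subsequence: for $j$ large enough, $\mathbf{x}^{\{k_j\}}$ lies in this punctured neighbourhood of $\mathbf{x}^\star$, so Definition~\ref{def:derivativedescent} can be applied at $\mathbf{x}^{\{k_j\}}$ with direction $\mathbf{u}^\star$, yielding an iterate $\mathbf{x}^{\{k_j+1\}}$ with $\mathbf{g}^{\mathrm{T}}(\mathbf{x}^{\{k_j+1\}})\mathbf{u}^\star < 0$ and $\mathbf{x}^{\{k_j+1\}}\neq \mathbf{x}^{\{k_j\}}$. This persistent strict descent prevents $\{\mathbf{x}^{\{k_j\}}\}$ from stabilising at $\mathbf{x}^\star$, contradicting $\mathbf{x}^{\{k_j\}}\to \mathbf{x}^\star$. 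Hence $\mathbf{x}^\star\in B_\epsilon$, which is exactly the claim.

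The main obstacle I anticipate is the rigorous handling of the stochastic clause in Definition~\ref{def:stochasticderivativecriticalpoint}: the SNN-GPP condition requires non-negativity \emph{with non-zero probability}, so its negation — and therefore the existence of the persistent descent direction $\mathbf{u}^\star$ used above — needs to be cast in an almost-sure form so that it interacts cleanly with the deterministic inequality $\mathbf{g}^{\mathrm{T}}(\mathbf{x}^{\{k+1\}})\mathbf{u}^{\{k\}} < 0$ appearing in Definition~\ref{def:derivativedescent}. Because $B_\epsilon$ is defined precisely to enclose all SNN-GPPs, every point outside it admits such an almost-sure descent direction, and this is what makes the contrapositive argument sharp; however, this relies on interpreting the descent sequence definition pathwise, which is the subtle modelling choice I would want to flag explicitly before closing the proof.
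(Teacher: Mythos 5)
Your proposal follows essentially the same route as the paper's own proof: both invoke descendibility to obtain subsequence convergence and then argue by contradiction that a limit point failing the SNN-GPP condition would admit a descent direction in a punctured neighbourhood, which is incompatible with being the limit of an associated derivative descent subsequence, forcing the limit into $B_\epsilon$. The only difference is that you spell out the transfer of the descent direction to the tail iterates and flag the ``non-zero probability'' subtlety, both of which the paper's proof leaves implicit.
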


\begin{proof}
	It follows our assertion that {$f$} is a descendible associated derivative multivariate function given by Definition~\ref{def:descendable},
	that every derivative descent subsequence $\{\mathbf{x}^{\{k\}}\}$ converges.
	
	Now let $\{{\mathbf{x}}^{\{k\}_m}\}$ be a convergent subsequence of	$\{\mathbf{x}^{\{k\}}\}$ and let $\mathbf{x}^{m*}$ be its limit. Suppose,
	contrary to the second assertion of the theorem, that $\mathbf{x}^{m*}$	is not a {SNN-GPP} as defined in Definition~\ref{def:generalderivativecriticalpoint} of {$f$}. Since we assume that $\mathbf{x}^{m*}$ is not a {SNN-GPP}, and by Definition~\ref{def:derivativedescent}, there exists a $\mathbf{x}^{m*}+\lambda\mathbf{u}$ for $\lambda \neq 0 \in \mathbb{R}$ and $\mathbf{u}\in\{\mathbf{y}\in R^n: \|\mathbf{y}\|_2 = 1 \}$ such that $\mathbf{g}(\mathbf{x}^{m*}+\delta\mathbf{u})\mathbf{u}<0$, which contradicts our assumption that $\mathbf{x}^{m*}$ is the limit of the subsequence $\{{\mathbf{x}}^{\{k\}_m}\}$. Therefore, for $\mathbf{x}^{m*}$ to be the limit of an {associated} derivative descent subsequence $\{\mathbf{x}^{\{k\}_m}\}$,	$\mathbf{x}^{m*} \in B_\epsilon$ as defined in Definition~\ref{thm:ball}, which completes the proof.
\end{proof}

Specific classes of descendible functions can be defined that include
descendible {associated} derivative unimodal multivariate functions.
\begin{definition}
	A multivariate function
	$f:X\subset\mathbb{R}^n\rightarrow\mathbb{R}$ is descendible if $\mathbf{x}^{\{0\}}\in \mathbb{R}^n$ and $\{\mathbf{x}^{\{k\}}\}$ is an {associated} derivative descent sequence, as defined in	Definition~\ref{def:derivativedescent}, for {$f$} with initial point $\mathbf{x}^{\{0\}}$, then every subsequence of $\{\mathbf{x}^{\{k\}}\}$ converges.  The limit of any convergent subsequence of $\{\mathbf{x}^{\{k\}}\}$ is a generalized {SNN-GPP}, as defined in Definition~\ref{def:generalderivativecriticalpoint}, of {$f$} and the set of {SNN-GPP} is compact.
\end{definition}
The class of {associated} derivative unimodal multivariate functions can be relaxed to descendible {associated} derivative
weakly unimodal multivariate functions
\begin{definition}
	A multivariate function
	$f:X\subset\mathbb{R}^n\rightarrow\mathbb{R}$ is
	descendible {associated} derivative according to Definition~\ref{def:descendable}
	and weakly unimodal if the set of SNN-GPP forms a dense set.
\end{definition}  
We consider the latter class of descendible {associated} derivative
weakly unimodal multivariate functions as an appropriate definition for a large class of machine learning 
loss functions. 

\section{Numerical Studies}

\subsection{Diverse Toy Classification Datasets}

In the numerical investigations we consider 22 classification problem datasets that cover from 150 to 70 000 observations per dataset. Our earliest dataset was made available in 1936 and the latest in 2016. The datasets vary in input features from 4 to 784 and classes from 2 to 29. This range in characteristics gives the selected problems a large variety of cost function landscapes for the different optimization algorithms to navigate. The primary aim of our numerical studies is to demonstrate that the performance trends observed are general across vastly different datasets, and representative of ANN problems with non-convex, discontinuous cost function landscapes. Details concerning the datasets are given in Table~\ref{tbl_datasets} and the corresponding neural network architectures implemented for the different datasets are given in Table~\ref{tbl_datasets_NN}. Fully connected neural network layers with one, as well as two hidden layers were considered for all of the datasets but one. For the two hidden layer case, each hidden layer was given the same number of nodes, as shown in Table~\ref{tbl_datasets_NN}. This results in a total of 43 classification problems to be solved. 

\begin{table}[h!]
	\caption{Properties of the datasets considered for the investigation.}
	\centering
	\scalebox{0.7}{
		\begin{tabular}{|c|c|c|c|c|c|c|}
			\hline  
			\multicolumn{6}{|c|}{Dataset properties} \\
			\hline
			Dataset ref. no. & Dataset name & Reference & Observations  & Features & Classes\\ 
			\hline 1 & Cancer1 & \cite{Prechelt1994} & 699  & 9 & 2 \\ 
			\hline 2 & Card1 & \cite{Prechelt1994} & 690  & 51 & 2 \\ 
			\hline 3 & Diabetes1 & \cite{Prechelt1994} & 768  & 8 & 2 \\ 
			\hline 4 & Gene1 & \cite{Prechelt1994} & 3175 & 120 & 3 \\ 
			\hline 5 & Glass1 & \cite{Prechelt1994} & 214 & 9 & 6 \\ 
			\hline 6 & Heartc1 & \cite{Prechelt1994} & 920 & 35 & 2 \\ 
			\hline 7 & Horse1 & \cite{Prechelt1994}& 364 & 58 & 3 \\ 
			\hline 8 & Mushroom1 & \cite{Prechelt1994} & 8124 & 125 & 2 \\ 
			\hline 9 & Soybean1 & \cite{Prechelt1994} & 683 & 35 & 19 \\ 
			\hline 10 & Thyroid1 & \cite{Prechelt1994} & 7200 & 21 & 3 \\ 
			\hline 11 & Abalone & \cite{Nash1994} & 4177 & 8 & 29 \\ 
			\hline 12 & Iris & \cite{Fisher1936} & 150 & 4 & 3 \\ 
			\hline 13 & H.A.R. & \cite{Anguita2012} & 10299 & 561 & 6 \\ 
			\hline 14 & Bankrupted Co. (yr. 1) & \cite{Zieba2016} & 7027 & 64 & 2 \\ 
			\hline 15 & Defaulted Credit Cards & \cite{Yeh2009} & 30000 & 24 & 2 \\ 
			\hline 16 & Forests & \cite{Johnson2012} & 523 & 27 & 4 \\ 
			\hline 17 & FT Clave & \cite{Vurkac2011} & 10800 & 16 & 4 \\ 
			\hline 18 & Sensorless Drive & \cite{Paschke2013} & 58509 & 48 & 11 \\ 
			\hline 19 & Wilt & \cite{Johnson2013} & 4839 & 5 & 2 \\ 
			\hline 20 & Biodegradable compounds & \cite{Mansouri2013} & 1054 & 41 & 2 \\ 
			\hline 21 & Simulation failures & \cite{Lucas2013} & 540 & 20 & 2 \\ 
			\hline 22 & MNIST Handwriting & \cite{Lecun1998} & 70000 & 784 & 10 \\ 
			\hline 
	\end{tabular}}
	\label{tbl_datasets} 
\end{table}

\begin{table}[h!]
	\caption{Properties of the neural network architecture for each dataset.}
	\centering
	\scalebox{0.7}{
		\begin{tabular}{|c|c|c|c|c|}
			\hline  
			\multicolumn{5}{|c|}{ANN properties}\\
			\hline Dataset name & Input nodes  & Hidden layer nodes & Hidden layers & Output nodes\\ 
			\hline Cancer1 & 9 & 8 & 1-2 & 2\\ 
			\hline Card1 & 51 & 5 & 1-2 & 2 \\ 
			\hline Diabetes1 & 8 & 7 & 1-2 & 2 \\ 
			\hline Gene1 & 120 & 9 & 1-2 & 3 \\ 
			\hline Glass1 & 9 & 5 & 1-2 & 6 \\ 
			\hline Heartc1 & 35 & 3 & 1-2 & 2 \\ 
			\hline Horse1 & 58 & 2 & 1-2 & 3 \\ 
			\hline Mushroom1 & 125 & 8 & 1-2 & 2 \\ 
			\hline Soybean1 & 35 & 3 & 1-2 & 19 \\ 
			\hline Thyroid1 & 21 & 8 & 1-2 & 3 \\ 
			\hline Abalone & 8 & 7 & 1-2 & 29 \\ 
			\hline Iris & 4 & 3 & 1-2 & 3 \\ 
			\hline H.A.R. & 561 & 7 & 1-2 & 6\\ 
			\hline Bankrupted Co. (yr. 1) & 64 & 35 & 1-2 & 2\\ 
			\hline Defaulted Credit Cards & 24 & 23 & 1-2 & 2 \\ 
			\hline Forests & 27 & 6 & 1-2 & 4 \\ 
			\hline FT Clave & 16 & 15 & 1-2 & 4 \\ 
			\hline Sensorless Drive & 48 & 47 & 1-2 & 11 \\ 
			\hline Wilt & 5 & 4 & 1-2 & 2 \\ 
			\hline Biodegradable compounds & 41 & 8 & 1-2 & 2 \\ 
			\hline Simulation failures & 20 & 8 & 1-2 & 2 \\ 
			\hline MNIST Handwriting & 784 & 30 & 1 & 10 \\ 
			\hline 
	\end{tabular}}
	\label{tbl_datasets_NN} 
\end{table}

We consider all four optimizers outlined, namely the minimizing line searches (GS and ARLS) and the gradient-only line searches (B-GOLS and I-GOLS). All optimizers are run for 3000 iterations, while the mini-batch size remains constant at 10 data points over all datasets. The mini-batch size is chosen arbitrarily in this case, though arguably all datasets have different characteristics and therefore the optimal mini-batch size to allow for comparable variance between datasets as well as optimizing for computational performance may vary \cite{Radiuk2017,Li}. However, this is outside of the scope of this study.

For all the given datasets, the data was split into training, validation and test datasets such that the ratios between the training dataset to the validation and test datasets are 3:1 respectively. The data is split into three datasets in order to show consistency between the different fractions of the data, should a validation set be selected for a stop criterion and a test set used to evaluate the generality of the model. It is therefore desirable to observe as much consistency as possible between validation and test datasets. The loss function expressed by Equation (\ref{eq_errfunc}) was used to determine the training as well as validation and test dataset errors, using the network in its current configuration at a given iteration. The optimization runs of 3000 iterations are repeated 10 times using exactly the same starting points, determined by generating values between $[-0.1,0.1]$ for each of the neural network weights \cite{Prechelt1994}. This ensures that initially all sigmoid activation functions are in their sensitive domain.

As some methods in this study use function values and other use gradients, a common convention is defined to compare the relative computational cost of the different types information used. Towards this aim, results are given as a function of {\it function evaluations}, where one function value evaluation equates to one function evaluation and one gradient evaluation equates to two function evaluations. A function value evaluation is simply a forward pass to determine the error from the cost function, whereas a gradient evaluation on its own consists of both a forward and a backward pass via backpropagation, thus having twice the computational cost of a function value evaluation. For fair comparison, the number of function evaluations are reported for the different methods and their performances based thereon.

\subsection{Variational Auto Encoder Training}

A separate example is included that demonstrates the performance of I-GOLS in the context of training a variational auto-encoder. The code is written using Pytorch and was sourced from Zou \cite{Zuo}, based on the work of Kingma \cite{Kingma2013}. It was subsequently modified to include and use I-GOLS in SGD. The parameters of I-GOLS were kept the same as discussed previously and the activation functions were again chosen to be sigmoids.

In this investigation we compare I-GOLS to three instances of SGD with fixed step sizes of $\alpha_n = 1\cdot 10^{-6}$, $\alpha_n = 1\cdot 10^{-5}$ and $\alpha_n = 1\cdot 10^{-4}$. These learning rates were manually chosen and selected such that one learning rate is too high, one appropriate and one too low, each separated by an order of magnitude.

\section{Results}

The results are split into different error plots for the training, validation and test datasets respectively. This shows the effectiveness of the different methods during training, and displays the consistency of obtained solutions to the unseen data. It is important to note here, that since the optimizers operate only on the training data, their specific performances are gauged on the training data errors. The validation and test dataset error plots give indications on the generalization of the network configurations found. Generalization is linked to the characteristics of the {\it individual} dataset (noise, data overlap etc.) and therefore should not be over-emphasised in determining an optimizers performance. This trait underlines the importance of considering a multitude of datasets for determining optimizer performance in this study. However, the trends of validation and test error can give insight into the behaviour of different optimizers and therefore should still be considered.

\subsection{Averaged results}

Figure~\ref{fig_HL_aves} shows the error plots as averaged over all the given datasets for the single hidden layer networks. The solid lines indicate the average errors as defined by the cost function for the different line search algorithms in terms of the number of function evaluations. The shadings surrounding the average error plots indicates the standard deviation of the errors.

\begin{figure}[h!]
	\centering 
	\begin{subfigure}[b]{0.35\textwidth}
		\includegraphics[width=\linewidth]{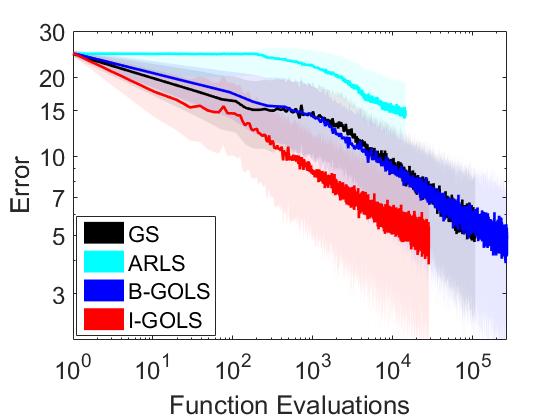}
		\caption{Training data}
	\end{subfigure}%
	\hspace*{\fill}
	\begin{subfigure}[b]{0.35\textwidth}
		\includegraphics[width=\linewidth]{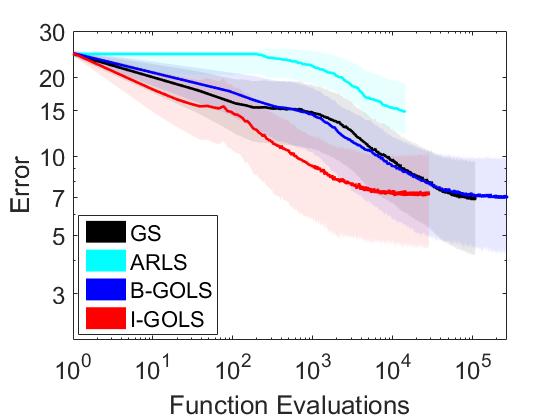}
		\caption{Validation data}
	\end{subfigure}%
	\hspace*{\fill}
	\begin{subfigure}[b]{0.35\textwidth}
		\includegraphics[width=\linewidth]{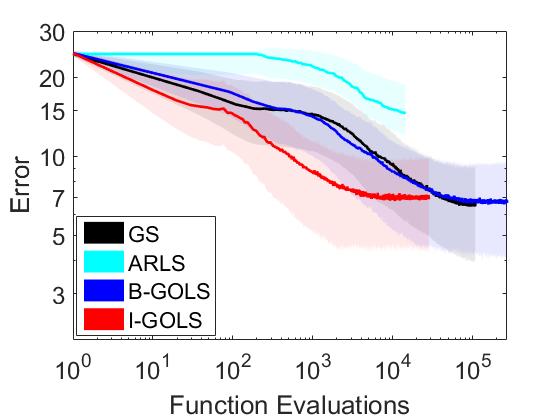}
		\caption{Test data}	    	
	\end{subfigure}%
	
	\centering 
	\begin{subfigure}[b]{0.35\textwidth}
		\includegraphics[width=\linewidth]{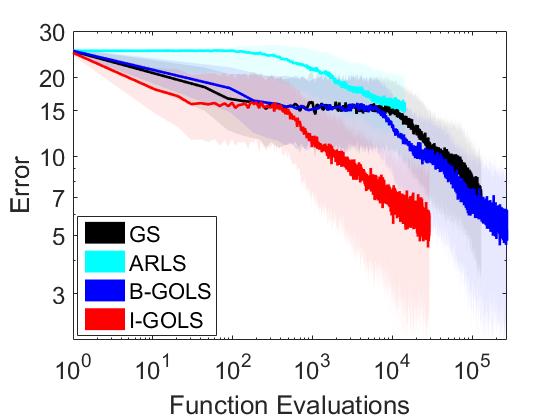}
		\caption{Training data}
	\end{subfigure}%
	\hspace*{\fill}
	\begin{subfigure}[b]{0.35\textwidth}
		\includegraphics[width=\linewidth]{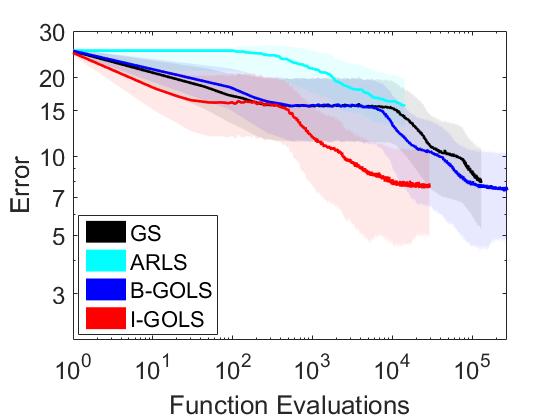}
		\caption{Validation data}
	\end{subfigure}%
	\hspace*{\fill}
	\begin{subfigure}[b]{0.35\textwidth}
		\includegraphics[width=\linewidth]{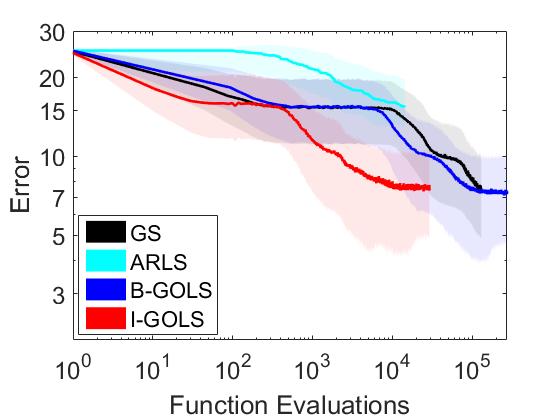}
		\caption{Test data}	    	
	\end{subfigure}%

	\centering 
	\begin{subfigure}[b]{0.35\textwidth}
		\includegraphics[width=\linewidth]{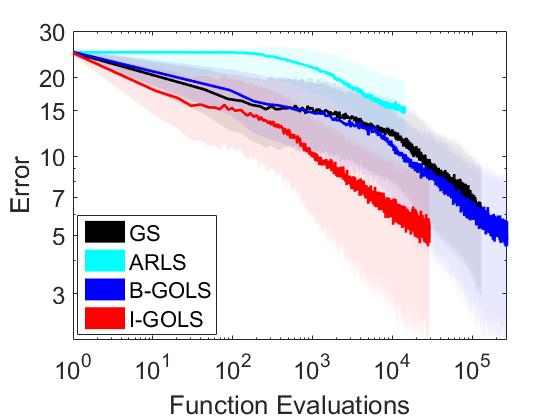}
		\caption{Training data}
	\end{subfigure}%
	\hspace*{\fill}
	\begin{subfigure}[b]{0.35\textwidth}
		\includegraphics[width=\linewidth]{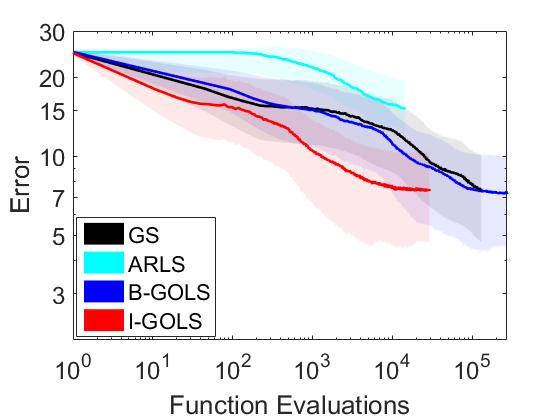}
		\caption{Validation data}
	\end{subfigure}%
	\hspace*{\fill}
	\begin{subfigure}[b]{0.35\textwidth}
		\includegraphics[width=\linewidth]{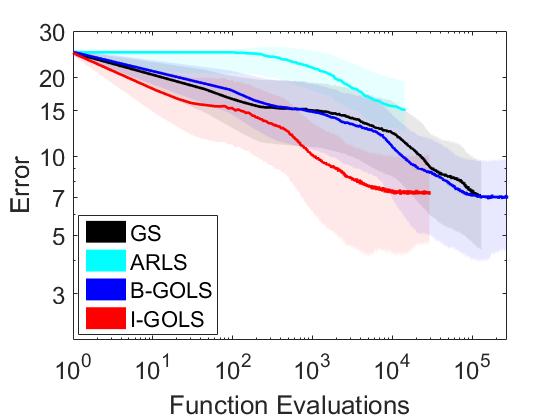}
		\caption{Test data}	    	
	\end{subfigure}%
	\caption{Average training, validation and test dataset error graphs averaged over (a)-(c) single hidden layer networks, (d)-(f) double hidden layer networks and (g)-(i) both single and double hidden layer networks. The graphs are corrected for the computational cost of function value and gradient information. They are given in terms of {\it Function Evaluations}, where a function value evaluation equates to one, and a gradient evaluation equates to two {\it Function Evaluations}.}
	\label{fig_HL_aves}
\end{figure}

The standard deviations are large in Figure~\ref{fig_HL_aves}, as the datasets considered exhibit a wide range of different characteristics, with varying minimum errors and convergence rates. The exact line search algorithms require more function evaluations per iteration than the inexact line searches, as expected. The gradient-only based line searches also require more function evaluations than the equivalent function value based methods. Therefore the question is whether the performance of the gradient-only methods outweighs their added computational cost. 

Consider the case of GS in the single hidden layer average results in Figures~\ref{fig_HL_aves}(a)-(c). Though the training error GS does not progress as far down as that B-GOLS, the validation and test errors are comparatively lower. This may indicate an ability to find better local minima. However, this trend is not general, as can be seen in the case of the double hidden layer networks in Figures~\ref{fig_HL_aves}(d)-(f). In this case B-GOLS is superior in both error and computational cost. This carries over to the average over all problems in Figures~\ref{fig_HL_aves}(g)-(i).

Between the inexact line searches it is clear, that I-GOLS is superior. Function value based ARLS fails to converge to a useful solution within the given number of iterations. The gradient only based I-GOLS produces comparable performance to B-GOLS at an order of magnitude lower number of function evaluations. This is consistent over all three average error plots, demonstrating the ability of I-GOLS to be competitive over both differing datasets as well as network architectures.

\subsection{Examples of individual best performances for different methods}

Within the dataset pool there were isolated incidences where different methods performed better than their averages reflect. In some cases the performance could be deemed better for given methods than the rest. 

This performance was measured by considering training, validation and test errors obtained at the respective computational cost, with a bias towards the errors. If the errors for the different methods were comparable, the best performer is the method with the least computational cost. However, if a clear minimum in error was achieved by a given method in the set number of iterations, that method was considered to be the best performer regardless of its computational cost. Given here are some examples of such cases and the number of their occurrences.

\subsubsection{Golden Section as best performer (3 out of 43 cases)}

\begin{figure}[h!]
	\centering 
	\begin{subfigure}[b]{0.35\textwidth}
		\includegraphics[width=\linewidth]{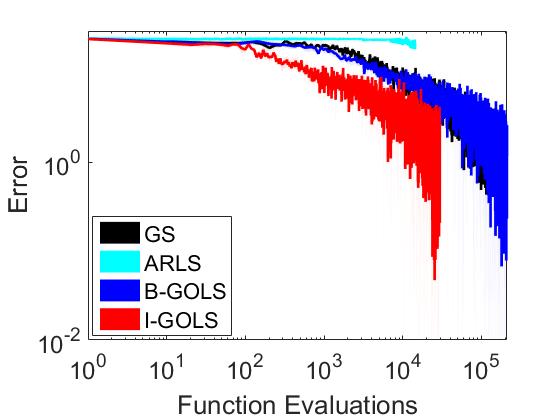}
		\caption{Training data}
	\end{subfigure}%
	\hspace*{\fill}
	\begin{subfigure}[b]{0.35\textwidth}
		\includegraphics[width=\linewidth]{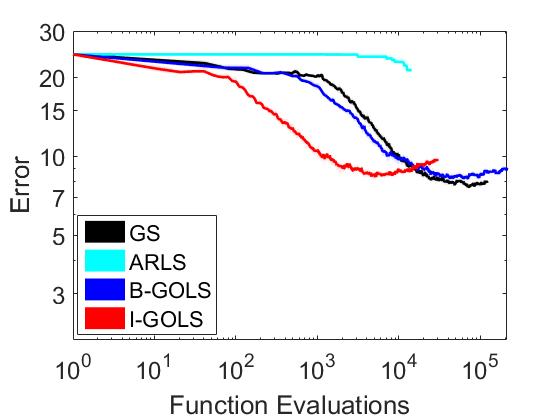}
		\caption{Validation data}
	\end{subfigure}%
	\hspace*{\fill}
	\begin{subfigure}[b]{0.35\textwidth}
		\includegraphics[width=\linewidth]{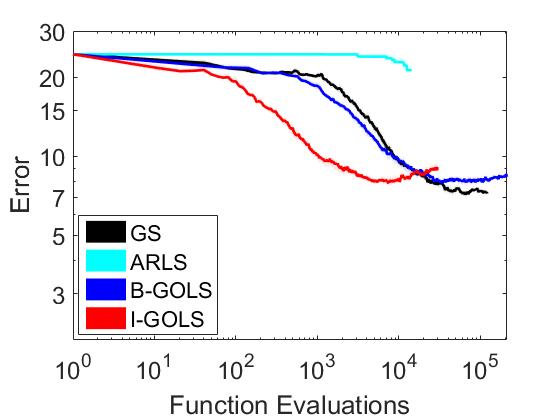}
		\caption{Test data}	    	
	\end{subfigure}%
	\caption{Average (a) training, (b) validation and (c) test dataset errors obtained for different line searches for the Gene1 dataset \cite{Prechelt1994} using the single hidden layer architecture, an example of a dataset where the function value based Golden-Section method was the best performer.}
	\label{fig_best_gs}
\end{figure}

Figure~\ref{fig_best_gs} gives an example where GS was effective in finding local minima. It is clear, that this particular problem is prone to overfitting, as the training error continues to decrease while that of the validation and test datasets exhibits a minimum and sharp increase as training continues. In the training data error plot in Figure~\ref{fig_best_gs}(a), the convergence rate of GS is comparable to that of B-GOLS, albeit that it does not progress as far down in error as B-GOLS for the same number of iterations. This is not so evident in Figure~\ref{fig_best_gs}, but can be deduced from Figures~\ref{fig_best_gs}(b) and (c), where the GS plot ends early. However, the resulting validation and test error minima are on average an error value of 0.8 lower. Similar cases occurred for 3 out of the total of 43 problems, and only on those using single hidden layer networks.

\subsubsection{Bisection Gradient-Only Line Search as best performer (6 out of 43 cases)}

Figure~\ref{fig_best_gob} shows an example of B-GOLS being the most effective in training. This problem exhibits the characteristic where the cost function has a large flat plane to traverse over at an error value around 5. However, after this obstacle has been overcome, convergence is more rapid. It is here where B-GOLS performed the best. Its convergence rate post flat plane is the fastest, obtaining the lowest training as well as validation and test errors for the given number of iterations. The improvement over its nearest rival, I-GOLS is an error value around 0.4. Examples such as these indicate that there are problems which require more accurate resolution of the SNN-GPP in order to progress effectively. However, more iterations would be required in such cases in order to determine the absolute performance of the methods. Training had not yet been completed, thus it is possible that I-GOLS, though slower in convergence after traversing the flat plane, would be computationally cheaper in finding a good SNN-GPP. This would require further investigation. Nevertheless, given the number of fixed iterations, there were 6 datasets on which B-GOLS marginally better than the rest.

\begin{figure}[h!]
	\centering 
	\begin{subfigure}[b]{0.35\textwidth}
		\includegraphics[width=\linewidth]{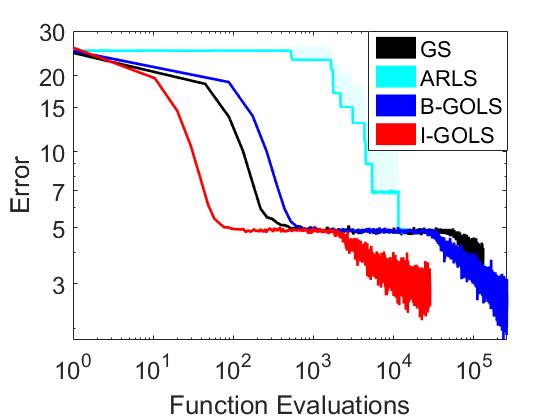}
		\caption{Training data}
	\end{subfigure}%
	\hspace*{\fill}
	\begin{subfigure}[b]{0.35\textwidth}
		\includegraphics[width=\linewidth]{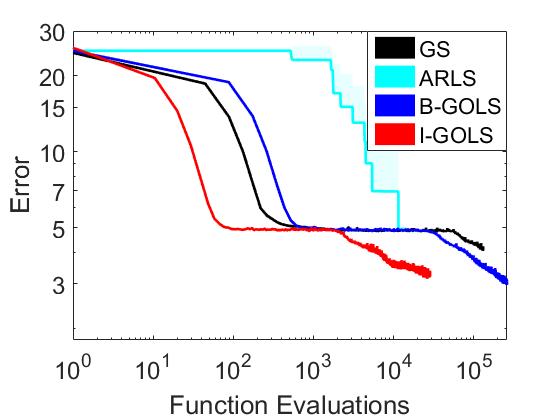}
		\caption{Validation data}
	\end{subfigure}%
	\hspace*{\fill}
	\begin{subfigure}[b]{0.35\textwidth}
		\includegraphics[width=\linewidth]{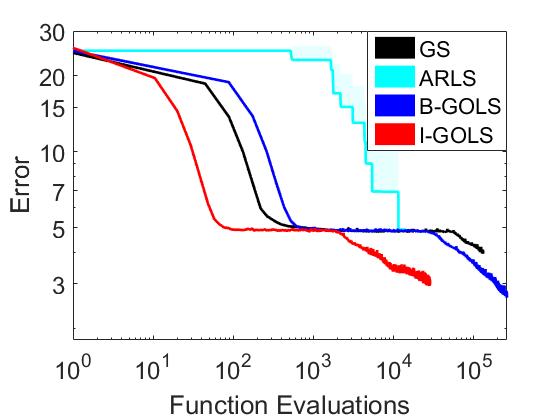}
		\caption{Test data}	    	
	\end{subfigure}%
	\caption{Average (a) training, (b) validation and (c) test dataset errors obtained for different line searches for the Soybean1 dataset \cite{Prechelt1994}, an example of the Bisection Gradient-Only Line Search as the best performer.}
	\label{fig_best_gob}
\end{figure}

\subsubsection{Inexact Gradient-Only Line Search as best performer (34 out of 43 cases)}

\begin{figure}[h!]
	\centering 
	\begin{subfigure}[b]{0.35\textwidth}
		\includegraphics[width=\linewidth]{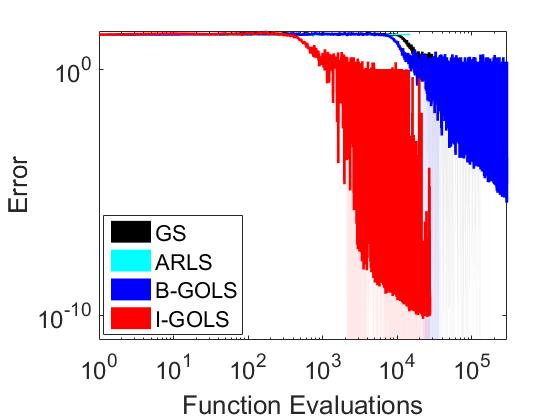} 
		\caption{Training data}
	\end{subfigure}%
	\hspace*{\fill}
	\begin{subfigure}[b]{0.35\textwidth}
		\includegraphics[width=\linewidth]{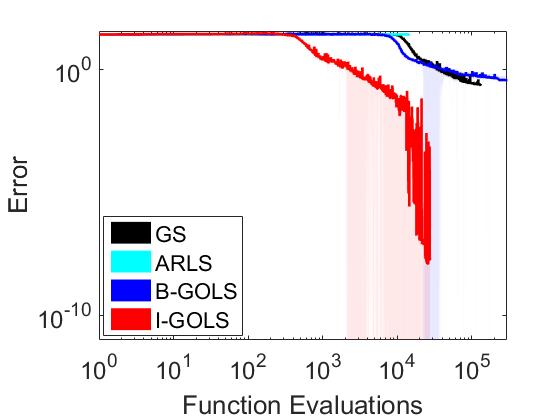}
		\caption{Validation data}
	\end{subfigure}%
	\hspace*{\fill}
	\begin{subfigure}[b]{0.35\textwidth}
		\includegraphics[width=\linewidth]{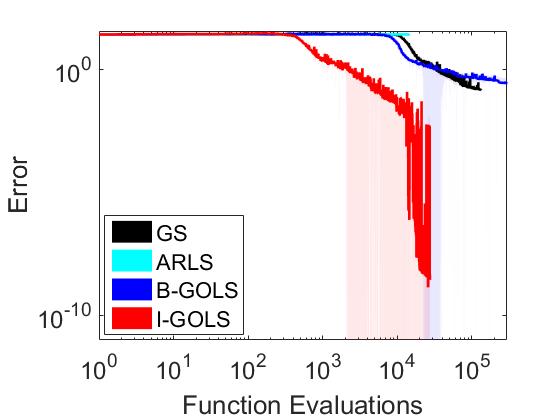}
		\caption{Test data}	    	
	\end{subfigure}%
	\caption{Average (a) training, (b) validation and (c) test dataset errors obtained for different line searches for the Mushroom1 dataset \cite{Prechelt1994}, an example of the inexact gradient-only line search method as the best performer.}
	\label{fig_best_gols}
\end{figure} 

Overall, I-GOLS is the most efficient in progressing training relative to computational cost. As confirmed in the averaged error plots, it is able to reach comparable validation and test error values at an order of magnitude lower function evaluation counts. However, in some cases its performance was considerably better than that of the rest. Figure~\ref{fig_best_gols} shows a case where the progress made is both more efficient as well as superior in convergence. For the given dataset the method progresses quickly towards a good SNN-GPP, causing the error to drop rapidly after overcoming an initial flat plane. In training, the obtained error is 5 orders of magnitude lower than the nearest competitor B-GOLS. This improvement is more pronounced with 9 orders of magnitude for the validation and test dataset errors. This indicates that I-GOLS was superior in training, as well as finding a generalized solution over B-GOLS and GS. This is one of the more extreme examples. However, in the majority of cases the method was capable of returning comparable or better errors than the remaining methods at a order of magnitude lower computational cost, which held for 34 out of 43 of the examined problems.

This shows that the method, though inexact, is capable of reliably traversing the different features of various cost functions relating to a range of datasets and neural network architectures.

\subsection{Comparing step size characteristics of the Glass1 and Cancer1 datasets}

Other matters of interest are the resolved step sizes at every iteration of the different line search methods. Figures~\ref{fig_steps_d1_5}(a)-(f) show the average step sizes, training errors as well as validation errors over 10 runs for the Glass1 and Cancer1 datasets with the double hidden layer architecture. Figure~\ref{fig_steps_d1_5}(a) shows how the step sizes on average vary during an optimization run for the different line search methods on the Glass1 problem. The function value based ARLS produces very small step sizes and therefore does not progress throughout the cost function, as has been previously observed. The GS method is more effective as it produces step sizes with a reasonable magnitude. However, these seem to vary around a mean value. Conversely, I-GOLS and B-GOLS show a distinct variation in step sizes as a function of iterations, first increasing by an order of magnitude into the range of $\alpha_n = 10^1$ before 1000 iterations are reached, then slowly decreasing to a range of $\alpha_n = 10^0$. This behaviour is matched to the shape of the error graphs, where a flat plane is overcome. As this plane is being traversed, the step size is increased. However, once this is overcome, the step size decreases as the optimization methods move along presumably more detailed features in the cost function. This is confirmation that GOLS are able to adjust the step size according to true trends of the discontinuous cost function. It would be difficult to manually pre-empt the requirement of, and determine a learning rate schedule to mimic this behaviour.

From this analysis it is also evident, that I-GOLS is more aggressive than B-GOLS, resulting in larger step sizes. However, this does not negatively impact optimization, as the training and validation errors in Figures~\ref{fig_steps_d1_5}(b) and (c) show I-GOLS to be the overall best performer.

\begin{figure}[h!]
	\centering 
	\begin{subfigure}[b]{0.35\textwidth}
		\includegraphics[width=\linewidth]{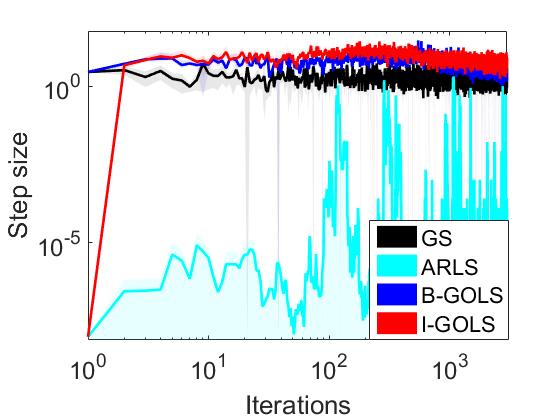}
		\caption{Step sizes}
	\end{subfigure}%
	\hspace*{\fill}
	\begin{subfigure}[b]{0.35\textwidth}
		\includegraphics[width=\linewidth]{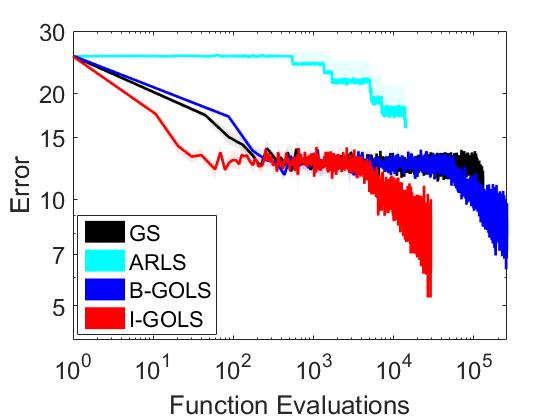}
		\caption{Training error}
	\end{subfigure}%
	\hspace*{\fill}
	\begin{subfigure}[b]{0.35\textwidth}
		\includegraphics[width=\linewidth]{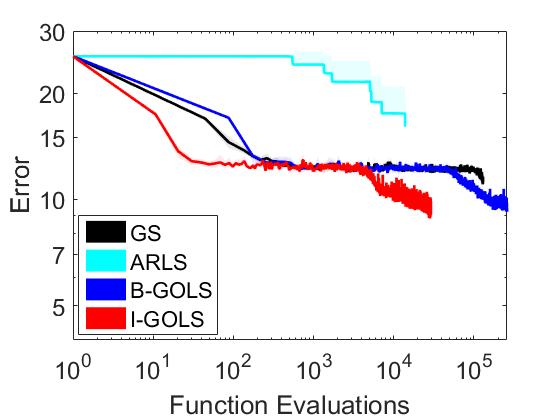}
		\caption{Validation error}	    	
	\end{subfigure}%
	
	\centering 
	\begin{subfigure}[b]{0.35\textwidth}
		\includegraphics[width=\linewidth]{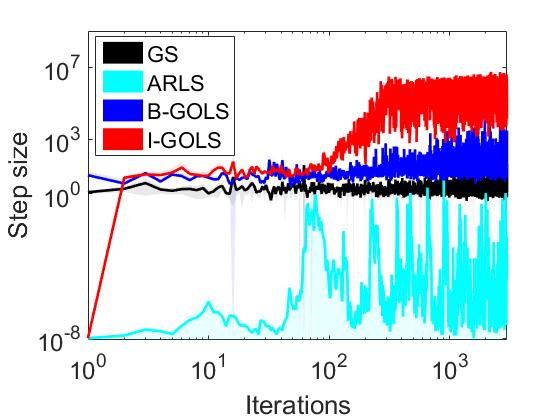}
		\caption{Step sizes}
	\end{subfigure}%
	\hspace*{\fill}
	\begin{subfigure}[b]{0.35\textwidth}
		\includegraphics[width=\linewidth]{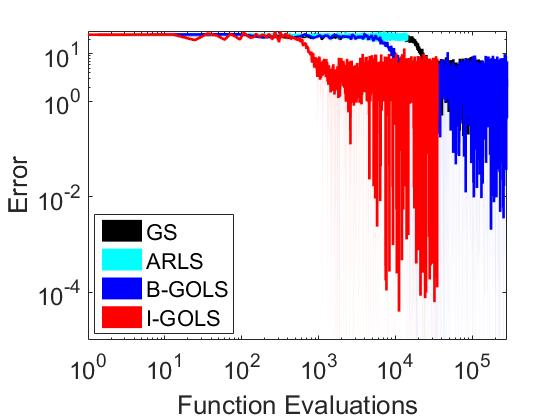}
		\caption{Training error}
	\end{subfigure}%
	\hspace*{\fill}
	\begin{subfigure}[b]{0.35\textwidth}
		\includegraphics[width=\linewidth]{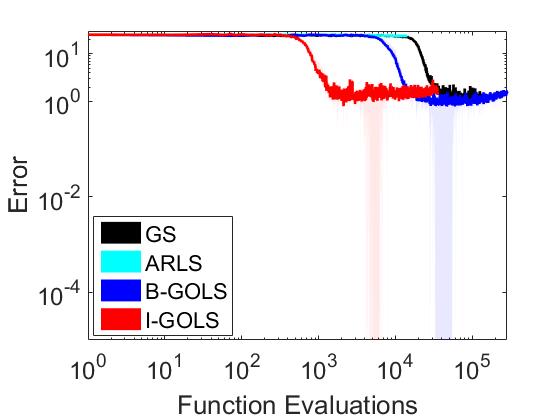}
		\caption{Validation error}	    	
	\end{subfigure}%
	\caption{Step size investigations for (a)-(c) the Glass1 dataset \cite{Prechelt1994} and (d)-(f) the Cancer1 dataset \cite{Prechelt1994} used with the double hidden layer network architecture. Shown are average Step sizes, training errors and validation errors obtained for different line search methods. In this case, the step sizes are given as a function of iterations.}
	\label{fig_steps_d1_5}
\end{figure}

Similarly, in the Cancer1 problem in Figures~\ref{fig_steps_d1_5}(d)-(f), the step sizes of I-GOLS are 4 orders of magnitude larger than those of B-GOLS, ending at step sizes just under $\alpha_n = 10^7$. In this case the gradients decrease towards the end of training, causing I-GOLS to increase the step size to move along the cost function. These large step sizes still relate to good performance on the training loss as depicted in Figure~\ref{fig_steps_d1_5}(e). In terms of validation error, depicted in Figure~\ref{fig_steps_d1_5}(f), a decrease in generalization is observed due to over-fitting of a too flexible neural network for the given data characteristics. However, in the context of training on the data and architecture given, it is evident that I-GOLS remains the best performer.

For the two considered datasets the step sizes exhibited very different non-linearly changing behaviour as a function of the number of iterations. It is practically intractable to determine an equivalent step size rule {\it a priori} for the given datasets, as it would require a substantial effort by the user to establish. This underlines the importance of resolving the step size automatically on an iteration basis via effective line search strategies.

The poor performance of ARLS can be attributed to local minima that result from the discontinuous loss function, in particular, positive jump discontinuities \cite{Wilke2013}. This significantly hampers the growing of the step size from its conservative initial guess. In the case of GS, an exponential bracketing strategy is used, which is less prone to this problem. It is the magnitude of the parameters in the bracketing strategy that aids the progress. The discontinuities cause the bracketing strategy to fall short of encompassing a true minimum. However, if a few probing iterations do not encounter an increasing jump discontinuity, a certain interval may already be defined and subsequently refined to an arbitrary step size therein. Statistically, this makes GS perform similarly to a fixed learning rate, dependent on the magnitude of the parameters chosen for the initial bracketing strategy. However, it is not able to reliably adjust the step size according to features in the cost function.

Conversely, both B-GOLS and I-GOLS is capable of adjusting the step size to the required magnitude within a single optimization iteration, as can be seen for both investigated cases in Figure~\ref{fig_steps_d1_5}(a) and (d). Both gradient based methods converge to similar magnitude step sizes initially. It is in the latter stages of training for the Cancer1 dataset that the two methods diverge in step size.

\subsection{Analysis of iterative performance of line search methods}

It is evident, that there are different computational costs associate with the different investigated line search methods. To quantify this, we counted the average number of function evaluations per iteration and summarize them in Table~\ref{tbl_fvalspinc}. To give context, a single iteration of SGD with a constant learning rate consists of 2 function evaluations, since a gradient computation is required. Function value based line search methods are on average close to half as expensive as the gradient-only based methods. This is due to the added cost of computing the gradient. The inexact methods offer about an order of magnitude computational savings over their exact equivalents. Though comparing function evaluations gives an indication of computational cost of the methods, it does not give a basis for comparison of the iterative efficiency of the respective exact and inexact line search methods. We therefore count the number of "information calls", where a function value and gradient evaluation each constitute a single information call. On this basis the respective exact and inexact line search methods perform very similarly, having similar number of information calls. This shows that algorithmically the methods are comparable. Thus the difference between them is simply a function of the information paradigms used. Even with the added computational cost of the gradient-only line searches, the information gain is substantial enough to offset its cost.

\begin{table}[h!]
	\centering
	\scalebox{0.65}{
		\begin{tabular}{|c|c|c|c|c|}
			\hline   & Func. value exact & Grad.-only exact & Func. value inexact & Grad.-only inexact \\ 
			\hline Ave. no. of function evaluations & 42.8 & 83.3 & 4.75 & 10.4 \\ 
			\hline Ave. no. of information calls & 42.8 & 41.7 & 4.75 & 5.2 \\ 
			\hline
	\end{tabular} }
	\caption{Average number of function evaluations and information calls per iteration for the various line search search optimizers.} 
	\label{tbl_fvalspinc}
\end{table}

Due to the stochastic nature of the loss function it seems more reasonable to consider inexact rather than exact strategies. An exact line search strategy wastes computational resources resolving the accuracy to a bound that is smaller than the variance in the solution due to stochastic subsampling. Gradient-only based I-GOLS is able to bypass discontinuities by observing consistent gradient trends in the loss function, while requiring less gradient evaluations than B-GOLS. It is therefore a plausible method to efficiently resolve the learning rate in the context of discontinuous loss functions, as a result of stochastic data sub-sampling, to sufficient accuracy.

\subsection{Investigation on a Variational Autoencoder example}

\begin{figure}[h!]
	\centering 
	\begin{subfigure}[b]{0.5\textwidth}
		\includegraphics[width=\linewidth]{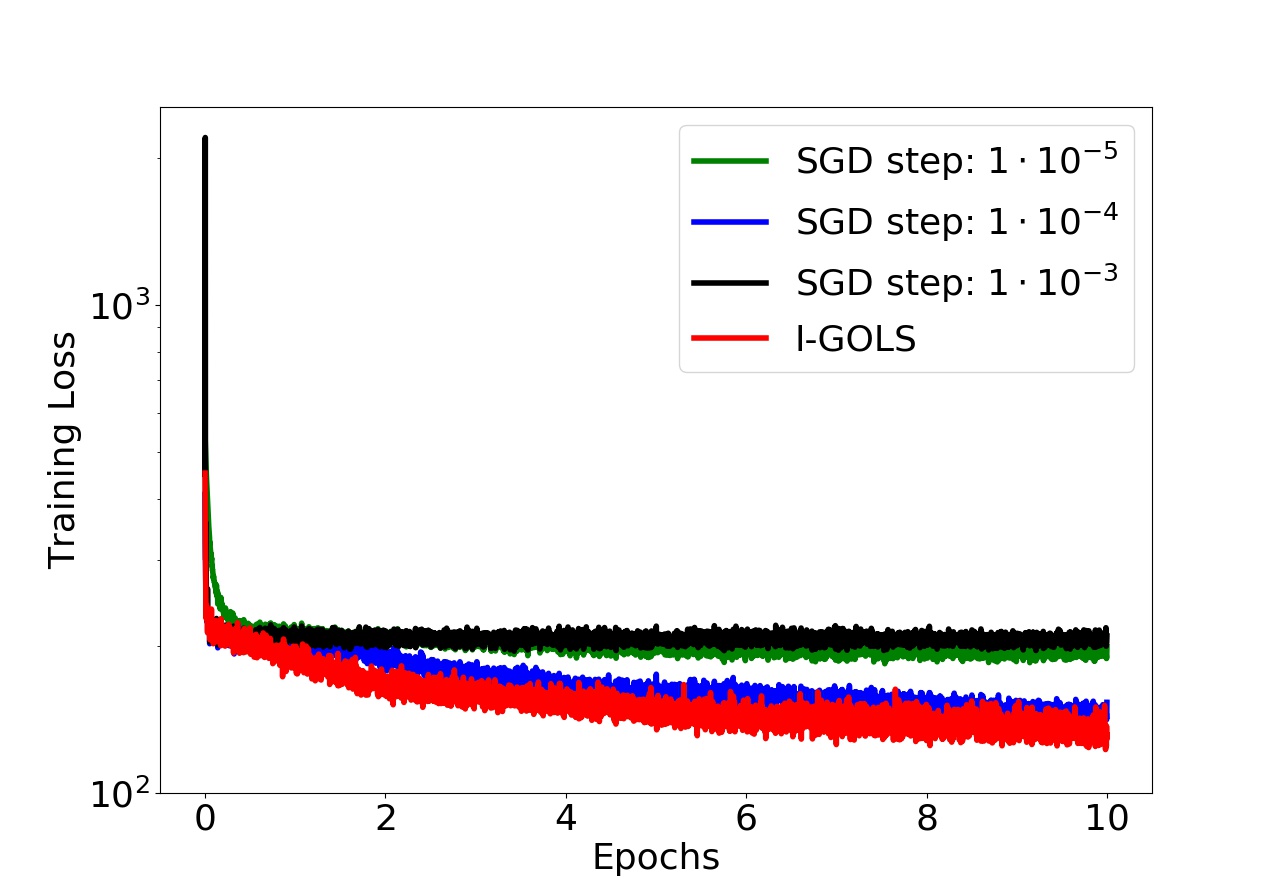}
		\caption{Training Loss}
	\end{subfigure}%
	\hspace*{\fill}
	\begin{subfigure}[b]{0.5\textwidth}
		\includegraphics[width=\linewidth]{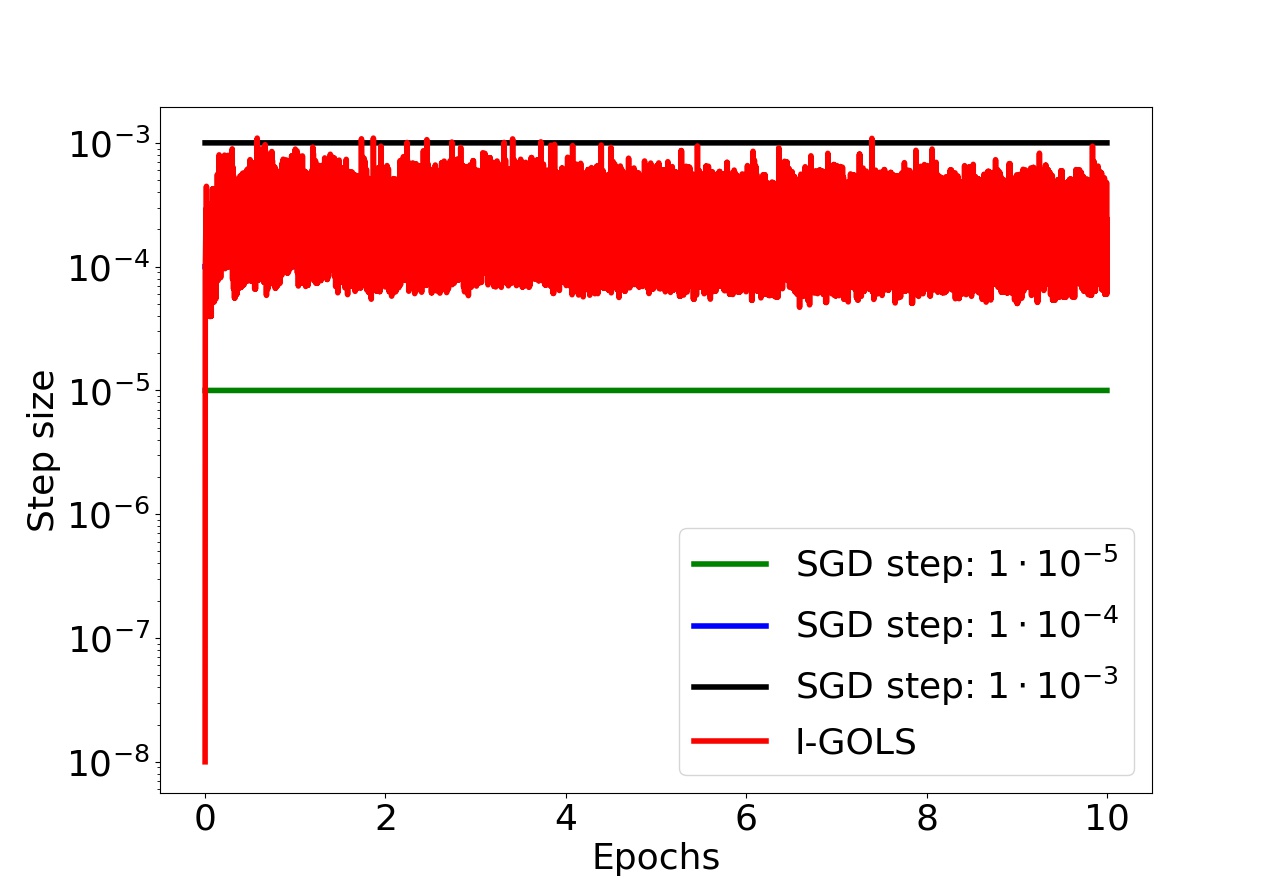}
		\caption{Step Size}
	\end{subfigure}%
	\caption{(a) Training loss and (b) step sizes errors for training a variational autoencoder on MNIST \cite{Lecun1998}. The gradient-only inexact line search is compared to three instances of SGD with different constant learning rates that were determined manually by the user.}
	\label{fig_vae}
\end{figure}

The resulting training loss and corresponding step sizes for the variational autoencoder (VAE) example are given in Figure~\ref{fig_vae}. The training loss in Figure~\ref{fig_vae}(a) shows the performance of the different training methods. The fixed learning rates $\alpha_n = 1 \cdot 10^{-3}$ and $\alpha_n = 1 \cdot 10^{-5}$ are too large and too small respectively. This is indicated by the errors "flat-lining" or decreasing very slowly. However, a fixed step size of $\alpha_n = 1\cdot 10^{-4}$ shows marked improvement in the decrease of the training error. This demonstrates the sensitivity of this problem to the learning rate: It is necessary to determine the value to within an order of magnitude in order to have satisfactory training performance. 
The convergence of I-GOLS is superior to that of the fixed step size of $\alpha_n = 1\cdot 10^{-4}$. 

Recall that these learning rates were manually assigned by the user. In the case of I-GOLS in Figure~\ref{fig_vae}(b), the step size is immediately and automatically adjusted from the initial guess to a range predominantly between $\alpha_n = 1\cdot 10^{-3}$ and $\alpha_n = 10^{-4}$. As training progresses, the step size range slowly decreases with subsequent epochs towards $\alpha_n = 10^{-4}$. This occurs entirely without modification to the parameters in I-GOLS or any other form of user intervention, showing that the SNN-GPP definition has made I-GOLS an effective line search strategy. Arguably one can also use iterative hyperparameter optimization methods \cite{Bergstra2012} to resolve a constant step size for this problem. However, these methods require more than 5 guesses to find an appropriate constant learning rate, making I-GOLS superior in terms of computational cost. In addition, should a problem require an adaptive learning rate, as pointed out, I-GOLS is able to resolve this on an iteration by iteration basis.  

The "manual" optimization process was spread over a range of 3 orders of magnitude and showed the sensitivity therein. Gradient-only based I-GOLS is capable of a step size range over 15 orders of magnitude and was immediately able to resolve a competitive learning rate automatically. This practical example confirms the utility of I-GOLS search in eliminating the learning rate hyperparameter from neural network training while using stochastic sub-sampling.

\section{Conclusion}

Data sub-sampling is readily used to optimize the weights for neural networks, as it speeds up the optimization process for large datasets and allows for memory constrained devices such as GPUs to be more readily employed. The added benefit of adding variance to the function values and gradients, allows optimization strategies to move beyond "weak" local minima or "weak" SNN-GPPs  in the loss function. This comes at the cost of
introducing discontinuities to the loss function that introduces sampling local minima which significantly hampers a function value based minimization line searches. This is evident by the poor performance of both the exact and inexact function value based line searches in our investigations. Both methods are hampered by the multi-modal nature of the objective function used in training. 

In turn, sampling discontinuities do not manifest so readily as SNN-GPPs which are solved for using gradient-only line search (GOLS) strategies. This study demonstrated that in the context of discontinuous loss functions due to stochastic data sub-sampling, the learning rate in steepest gradient descent can be reliably resolved using exact or inexact gradient-only line search strategies over 22 different datasets used in shallow, deep, and variational autoencoder neural network architectures.

Currently, learning rates or learning schedules in ANN training are either selected {\it a priori} and manually adjusted until settings are found that are deemed acceptable by the user, or expensively solved at the global hyperparameter level. This study demonstrated that line searches can be reliably performed to resolve step sizes in discontinuous loss functions as seen in Neural Network training, alleviating the need for {\it a priori} determined step sizes or step size rules. 

The usage of robust line searches may have significant implications for conjugate gradient and Quasi-Newton philosophies to be applied to ANN training.

\begin{acknowledgements}
	\label{Ack}
	This work was supported by the Centre for Asset and Integrity Management (C-AIM), Department of Mechanical and Aeronautical Engineering, University of Pretoria, Pretoria, South Africa.
\end{acknowledgements}


\bibliography{gols} 
\bibliographystyle{spmpsci}      


\appendix

\section{Artificial Neural Networks}

The focus of this study is to compare different line search strategies in a stochastic environment and compare their relative performances. The choice of neural network architecture is therefore incidental, as it simply allows for the platform for comparison. Therefore fully connected feed forward single and double hidden-layer neural networks with the sigmoid activation functions were chosen. This architecture is expressed mathematically by Equations (\ref{eq_ANN}) and (\ref{eq_ANN2}) below.

Suppose a given datasets has an input domain, $\mathbf{X}$ with $B$ observations and $D$ dimensions (features). The respective output domain is given by $\mathbf{Y}$ with corresponding observations $B$ and output dimensions $K$, which in our case are representative of the class on one-hot basis. Then for every observation $b$ and every output dimension $k$, a prediction of the output data $\hat{\mathbf{Y}}$ can be constructed from the original data input domain $\mathbf{X}$, given by 

\begin{equation}
\hat{\mathbf{Y}}_{bk} = h_{outer} (\sum_{j=1}^{M_1} \mathbf{W}_{kj}^{(2)} h_{inner} (\sum_{i=1}^{D} \mathbf{W}_{ji}^{(1)} \mathbf{X}_{bi} + \mathbf{W}_{j0}^{(1)} ) + \mathbf{W}_{k0}^{(2)} ),
\label{eq_ANN}
\end{equation}

for a single hidden layer network and 

\begin{equation}
\hat{\mathbf{Y}}_{bk} = h_{outer} (\sum_{l=1}^{M_2} \mathbf{W}_{lk}^{(3)} h_{inner}^{(2)} (\sum_{j=1}^{M_1} \mathbf{W}_{kj}^{(2)} h_{inner}^{(1)} (\sum_{i=1}^{D} \mathbf{W}_{ji}^{(1)} \mathbf{X}_{bi} + \mathbf{W}_{j0}^{(1)} ) + \mathbf{W}_{k0}^{(2)} ) + \mathbf{W}_{l0}^{(3)} ),
\label{eq_ANN2}
\end{equation}

for a double hidden layer network.

The number of nodes in the respective hidden layers is given by $M_{n}$, $n \in [1,2]$. The nodal activation function is denoted by $h$ and $\mathbf{W}$ represents the nodal weights between different layers, where $\mathbf{W}^{(c)}$, $c \in [1,2,3]$ denotes the set of weights connecting sequential layers in the network between the input layer and the output layer in a forward direction. Thus the single hidden layer network has two sets of weights and the double hidden layer network has three respectively \cite{Bishop2006}.

The nodal weights $\mathbf{W}$ are optimized to a configuration which best captures the relationship between the input and output of the data. The cost-function used is the mean least squared error, determined over every $b$ in batch size $P$ and every class $k \in K$ according to the Proben1 dataset guidelines \cite{Prechelt1994} as:

\begin{equation}
E(\mathbf{W}) = \frac{100}{K\cdot P} \sum_{b = 1 }^{P} \sum_{k=1}^{K}(\hat{\mathbf{Y}}_{bk}(\mathbf{W}) - \mathbf{Y}_{bk})^2,
\label{eq_errfunc}
\end{equation}

where $\hat{\mathbf{Y}}(\mathbf{W})$ is the current output of the current network configuration, manipulated as a function of the weights, and $\mathbf{Y}$ is the target output of the corresponding training dataset samples.

\section{Exact line search: Bisection Gradient-only Line Search (B-GOLS)}

The directional derivative values used in this method, denoted $l_{deriv}$, $ m_{deriv}$ and $u_{deriv}$ are each calculated by taking the dot products between the gradient value, $\mathbf{g}(\mathbf{x}_n + \alpha \cdot \mathbf{u}) $ and the search direction, $\mathbf{u}$, at the respective values for $\alpha$ at the different points.

\begin{enumerate}
	\item Define constants: $\delta = 5$, $ r = \frac{\sqrt{5} + 1}{2}$, maximum step size $\alpha_{max}$, minimum step size $\alpha_{min}$, $tol = 10^{-12}$, flag = 1, $k = 0$, $k_{max} = 1000$.
	\item Determine step sizes for: the lower bound, $l = 0$; middle, $m = \delta$ and upper, $ u = m + r \cdot \delta$
	\item flag = 1;
	
	\item evaluate $m_{deriv}$, increment k
	\item evaluate $u_{deriv}$, increment k
	
	\item if $u > \alpha_{max}$
	\begin{enumerate}
		\item $u = \alpha_{max}$
		\item $ I = u - l$
		\item $m=l+\frac{1}{2} I$
		\item evaluate $m_{deriv}$, increment k
		\item evaluate $u_{deriv}$, increment k
	\end{enumerate}	
	
	\item Bracket an interval: while $u_{deriv}$ is negative and flag and $ k < k_{max}$
	\begin{enumerate}
		\item $m = u$
		\item $u = m + r^k \cdot \delta$, where k is the number of function calls
		\item evaluate $ u_{deriv}$, increment k
		\item if  $u > \alpha_{max}$
		\begin{enumerate}
			\item flag = 0
			\item $ \alpha = \alpha_{max} $
		\end{enumerate}
	\end{enumerate}
	
	\item if flag = 1, reduce the interval
	\begin{enumerate}
		\item Define Interval, $I = u - l$
		\item while $I > tol$ and $ u > \alpha_{min}$ and $k < k_{max}$
		\begin{enumerate}
			\item if the sign of $m_{deriv}$ is negative and the sign of $u_{deriv}$ is positive
			\begin{enumerate}
				\item $l = m$				
				\item $I = u - l$
			\end{enumerate}
			\item else if the sign of $m_{deriv}$ is positive
			\begin{enumerate}
				\item $ u = m $
				\item $ u_{deriv} = m_{deriv} $
				\item $ I = u - l $		
			\end{enumerate}
			\item $ m = l + \frac{1}{2}I$
			\item Evaluate the new $m_{deriv}$, increment k
		\end{enumerate}	
		\item finalize the step size: $ \alpha = \frac{u+l}{2}$
	\end{enumerate}
\end{enumerate}

\section{Inexact line search: Inexact Gradient-only Line Search (I-GOLS)}

Parameters used for this method are: $\eta = 2$, $r = 0$, $\alpha_{min} = 10^{-8}$ and $\alpha_{max} = 10^7$.  $F'(\alpha) =  \mathbf{g}(\mathbf{x}_n + \alpha \cdot \mathbf{u})\mathbf{u} $.

\begin{enumerate}
	\item Define constants: $\alpha_{max}$, $\alpha_{min}$, flag = 1, initial guess $\alpha = \alpha_{min}$, $k = 0$, $\eta = 2$, $r=0$
	\item Evaluate $ F'(0) $, increment $k$ 
	\item Evaluate $ F'(\alpha) $, increment $k$
	\item Define $tol_{deriv} = |(1-r)F'(0)|$
	\item if $F'(\alpha) > tol_{deriv}$
	\begin{enumerate}
		\item flag = 1, which will signal a decrease
	\end{enumerate}
	\item if $F'(\alpha) < tol_{deriv}$
	\begin{enumerate}
		\item flag = 2, which will signal an increase
	\end{enumerate}
	\item while flag $>$ 0
	\begin{enumerate}
		\item increment $k$
		\item if flag = 1
		\begin{enumerate}
			\item $\alpha = \frac{\alpha}{\eta} $
			\item Evaluate $ F'(a) $ 
			\item if $F'(\alpha) < tol_{deriv}$
			\begin{enumerate}
				\item flag = 0
			\end{enumerate}
		\end{enumerate}	
		\item if flag = 2
		\begin{enumerate}
			\item $\alpha = \alpha \cdot \eta$
			\item Evaluate $ F'(\alpha) $ 
			\item if $F'(\alpha) > tol_{deriv}$ 
			\begin{enumerate}
				\item $a = \frac{\alpha}{\eta}$
				\item flag = 0
			\end{enumerate}
		\end{enumerate}
		\item if $\alpha < \alpha_{min}$
		\begin{enumerate}
			\item flag = 0
			\item $\alpha = \alpha_{min}$ 
		\end{enumerate}
		\item if $\alpha > \alpha_{max}$
		\begin{enumerate}
			\item flag = 0
			\item $\alpha = \alpha_{max}$ 
		\end{enumerate}
	\end{enumerate}	
\end{enumerate}

\end{document}